\documentclass[11pt,letterpaper]{mystyle}

\title{\textit{Agent-Centric Observation Adaptation for Robust Visual Control under Dynamic Perturbations}}

\author{
  Zhengru Fang$^{1,2,*}$,
  Yu Guo$^{1,*}$,
  Fei Liu$^{1}$,
  Yuang Zhang$^{2}$,
  Yihang Tao$^{1}$,
  Senkang Hu$^{1}$,
  Wenbo Ding$^{2}$,
  Yuguang Fang$^{1}$
\\
\normalfont{
$^1$ City University of Hong Kong,
$^2$ Tsinghua University
}
}

\begin{document}


\begin{abstract}
Real-world visual systems face time-varying perturbations, including weather, sensor noise, compression artifacts, and background distractions. Existing image restoration methods are typically designed for fixed corruption types and optimized for pixel-level fidelity, leaving open two questions: how restoration behaves under non-stationary corruption switching, and whether pixel-level fidelity preserves the task-relevant information needed by downstream models. To study this setting, we introduce the Visual Degraded Control Suite (VDCS), a benchmark that injects Markov-switching physical degradations into rendered scenes. We further identify a fundamental failure mode of reconstruction-based representations: faithfully reconstructing corrupted observations forces the latent state to encode corruption-specific nuisance information, thereby contaminating downstream models. From an information-bottleneck perspective, anchoring the representation to the clean foreground eliminates this contamination.
Motivated by this analysis, we propose \emph{Agent-Centric Observations with Mixture-of-Experts} (ACO-MoE), a frozen, plug-and-play observation adapter that combines a routed bank of restoration experts with a foreground-mask branch. ACO-MoE is pretrained entirely offline on synthetic rendered data with automatically generated degradation pairs and simulation-derived foreground masks, requiring no manual annotation. At inference time, it takes only corrupted RGB as input without corruption labels, clean reference frames, or foreground masks. Across VDCS, DMC-GB, and RoboSuite, ACO-MoE consistently improves downstream control with both model-free and model-based backbones, recovering 95.3\% of clean-input performance under challenging Markov-switching corruptions. It also generalizes zero-shot to unseen visual perturbations excluded from adapter pretraining. Source code is available at \url{https://github.com/fangzr/aco-moe-code}.
\end{abstract}

\maketitle

\pagestyle{headstyle}
\thispagestyle{empty}

\section{Introduction}
\label{sec:introduction}

Modern visual systems are increasingly deployed in environments where the input stream is unreliable. Weather phenomena (rain, haze, snow), sensor noise, lossy compression, and changing scene context all corrupt observations in ways the downstream model was not trained to handle. Image restoration has made substantial progress on individual corruption types---denoising~\cite{dabov2007image}, deraining~\cite{xiao2022image}, and unified restorers~\cite{chen2022simple,li2022allone,potlapalli2023promptir,ren2024moediffir} that handle several degradations within a single network. However, two assumptions of the standard restoration setup limit its usefulness in real deployments. \emph{First}, evaluation typically applies a single fixed corruption per clip, whereas real environments exhibit \emph{non-stationary} corruption, where the active degradation type can switch within seconds. \emph{Second}, restoration is optimized for pixel-level fidelity against a clean target, with no notion of which pixels actually matter to the downstream system that consumes the restored output.
\begin{figure}[t]
  \centering
  \includegraphics[width=\linewidth]{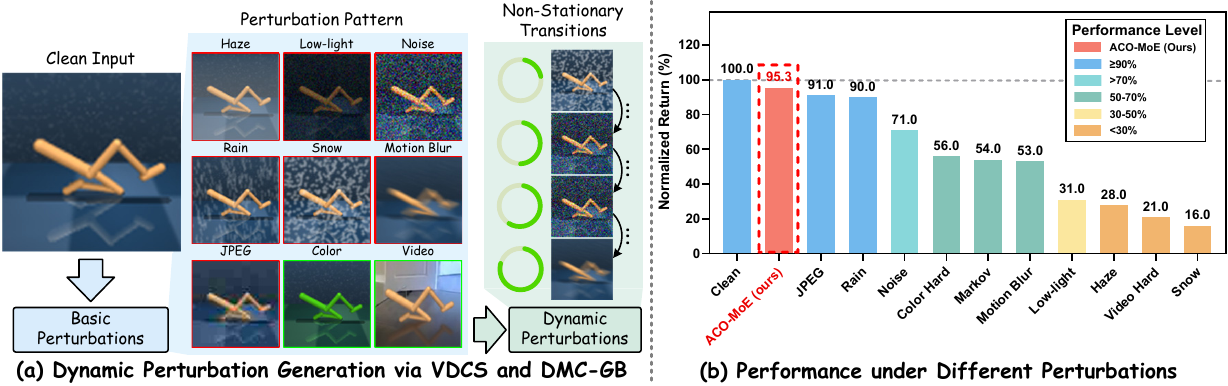}
  \caption{\textbf{Dynamic perturbation examples and performance degradation on DMControl~\cite{tassa2018deepmind}.} \textbf{Left:} Diverse perturbations including 7 VDCS physical degradations (\textcolor{red}{red border}) and 2 DMC-GB background distractions (\textcolor{green}{green border}); VDCS types switch dynamically via Markov chains during deployment. \textbf{Right:} Normalized returns of DreamerV3~\cite{hafner2023mastering} drop significantly under all disturbances, degrading to 54\% under Markov switching.}
  \label{fig:motivation}
  \vspace{-3mm}
\end{figure}

These limitations become most consequential when the downstream task is closed-loop visual control, where every restored frame feeds directly into a learned policy or world model. Modern visual control agents~\cite{mnih2015human,xu2023drmmasteringvisual,hafner2023mastering,micheli2022transformersaresample,zhang2023stormefficientstochastic,robine2023transformerbasedworld,zhang2026sac,10779389,robosuite2020,zhou2024robodreamerlearningcompositional,yang2023learninginteractivereal,hu2023planningorientedautonomous,zheng2023occworldlearninga,wang2023drivedreamertowardsreal,chen2025drivexomniscene,10976336,hu2024agentscodriverlargelanguagemodel} perform well on clean inputs but degrade catastrophically under even mild visual perturbation (Fig.~\ref{fig:motivation}). The failure mechanism varies with architecture: model-free encoders confuse task-relevant state with corruption artifacts~\cite{pan2024isodream,sun2024hrssm}, while reconstruction-based world models such as DreamerV3~\cite{hafner2023mastering} are trained to faithfully predict corrupted observations, forcing the latent state to encode corruption-specific patterns rather than task content. Non-reconstructive planners such as TD-MPC2~\cite{hansen2024tdmpc2} likewise consume pixel observations and propagate corruption errors through every imagined rollout. We formalize this in Sec.~\ref{sec:method} and show that accurate reconstruction provably contaminates the representation with nuisance corruption information.

Two existing lines of work address parts of this problem, yet neither fully resolves it. \emph{Generic image restoration} methods~\cite{li2022allone,potlapalli2023promptir,ren2024moediffir} handle diverse corruptions in a task-agnostic manner, but are evaluated on i.i.d.\ clip-level corruptions and have not been integrated as preprocessing for closed-loop systems under switching conditions. \emph{Robust visual control} either applies data augmentation~\cite{hansen2021stabilizing,laskin2020reinforcement,kostrikov2020image,song2024simgrl} and domain randomization, or learns perturbation-invariant representations through contrastive or regularization objectives~\cite{bertoin2022lookwhereyou,zhang2020invariant,zheng2023tacotemporallatent,zhang2024focusonwhat,han2026dspregdomainsensitiveparameterregularization}. Both approaches retain corrupted pixels in the input of the policy encoder; when the foreground itself is degraded, no downstream invariance objective can recover information already lost at the pixel level. Notably, neither line of work performs \emph{task-aware} restoration of the observation before the downstream model consumes it.

A widely adopted testbed for visual robustness is the DMControl Generalization Benchmark (DMC-GB)~\cite{hansen2021generalization}, which evaluates robustness to background distraction such as natural-video backgrounds and slight foreground color changes. However, DMC-GB does not assess the impact of physical image degradations that are commonplace in real deployments: weather phenomena alter scene geometry and texture; sensor-induced artifacts (Gaussian noise, motion blur, JPEG compression) corrupt pixel statistics globally; low-light conditions reduce perceptual contrast. Furthermore, DMC-GB applies a single fixed disturbance per episode, whereas real deployments exhibit non-stationary transitions between perturbation types. Vision corruption benchmarks~\cite{hendrycks2019benchmarkingneuralnetwork,croce2020robustbenchastandardized,kar2022imagenet3dcc} and test-time adaptation methods~\cite{boudiaf2022lame,liu2021tttpp} also assume a single fixed corruption per clip and have not been integrated with closed-loop visual systems. Taken together, these observations reveal three concrete challenges for robust visual perception under dynamic perturbations:
\begin{enumerate}[leftmargin=*, itemsep=0pt, topsep=2pt]
    \item \textbf{Non-stationary benchmarking:} How can we evaluate
    \emph{temporally switching} physical degradations beyond static
    background distractions?

    \item \textbf{Heterogeneity:} How can a single module handle multiple corruptions with incompatible inverse mappings \emph{without knowing} the active corruption?

    \item \textbf{Task-relevance:} How can we restore observations in a way that preserves the information downstream models actually need, while remaining stable as a frozen preprocessing module?
\end{enumerate}

To address these challenges, we introduce the Visual Degraded Control Suite (VDCS) to generate Markov-switching corruption sequences (Fig.~\ref{fig:motivation}a), and present \emph{Agent-Centric Observations with Mixture-of-Experts} (ACO-MoE), a frozen, plug-and-play observation adapter that pairs a routed bank of restoration experts with a foreground-mask branch. The router dynamically selects input-adaptive latent experts without requiring any oracle corruption label; the selected expert simultaneously repairs degraded RGB and segments the task-relevant foreground, compositing it onto a uniform black background. The entire module remains \emph{frozen} during downstream backbone learning to prevent observation drift. As shown in Fig.~\ref{fig:motivation}b, this decoupling of perception from perturbation substantially mitigates the degradation suffered by downstream visual-control backbones across diverse corruption types. Beyond the corruption families used for adapter pretraining, ACO-MoE also maintains strong zero-shot performance under unseen OOD visual perturbations. Specifically, our contributions are summarized as follows:

\begin{itemize}[leftmargin=*, itemsep=0pt, topsep=2pt]
    \item[\textbf{(1)}] We introduce VDCS, a benchmark extending DeepMind Control Suite~\cite{tassa2018deepmind} with seven physical degradations under Markov-switching dynamics, enabling systematic evaluation of restoration and downstream control under non-stationary visual corruption.

    \item[\textbf{(2)}] We formally show that reconstruction-based representations inevitably entangle corruption information in their latent state, and prove that foreground anchoring is a valid information-bottleneck surrogate that exactly eliminates this entanglement.

    \item[\textbf{(3)}] We propose ACO-MoE, a dual-stream Mixture-of-Experts architecture that achieves task-aware restoration by simultaneously repairing corrupted pixels and extracting task-relevant foreground, requiring no oracle corruption labels at training or inference.

    \item[\textbf{(4)}] Across VDCS, DMC-GB (Color/Video Hard), and RoboSuite, ACO-MoE achieves state-of-the-art performance with both model-free and model-based downstream backbones. Additional comparisons with frozen generic restorers further show that the gains arise from task-aware agent-centric adaptation rather than pixel restoration alone.
\end{itemize}
\section{Related Work}
\label{sec:related_work}

\noindent\textbf{Robust visual control under perturbations.} Existing approaches mitigate visual perturbations through three complementary lines. Data augmentation~\cite{yarats2021mastering,hansen2021stabilizing,laskin2020reinforcement,kostrikov2020image,song2024simgrl} narrows the train--test gap via random shifts and color jitter, while invariant-representation methods~\cite{laskin2020curl,zheng2023tacotemporallatent,yuan2022pretrainedimage,bertoin2022lookwhereyou,zhang2024focusonwhat,xu2023drmmasteringvisual,nakamoto2024steeringyourgeneralists,han2026dspregdomainsensitiveparameterregularization,zhang2020invariant} learn encoders that suppress corruption-specific features through contrastive or regularization objectives. Both keep corrupted pixels in the encoder's input; when foreground texture is physically destroyed, no downstream invariance objective can recover what is already lost. Foreground-aware adapters~\cite{wang2023generalizable,wang2025ftr,kim2024make} address this with object-centric priors; the closest work, FTR~\cite{wang2025ftr}, uses SAM-supervised masking with 200K test-time adaptation steps. On the model-based side, world models~\cite{ha2018world,hafner2019learning,micheli2022transformersaresample,zhang2023stormefficientstochastic,robine2023transformerbasedworld,hafner2023mastering,hansen2024tdmpc2,zhou2024dinowmworld,zhou2024robodreamerlearningcompositional,yang2023learninginteractivereal,yu2026latent} share the same vulnerability: reconstruction-based latents encode nuisance corruption features~\cite{pan2024isodream,sun2024hrssm}, and even non-reconstructive planners require clean visual input. ACO-MoE instead restores observations \emph{before} the backbone consumes them, requires no test-time adaptation, and leaves the backbone unchanged.

\noindent\textbf{Visual corruption benchmarks.} ImageNet-C~\cite{hendrycks2019benchmarkingneuralnetwork}, RobustBench~\cite{croce2020robustbenchastandardized}, and 3D Common Corruptions~\cite{kar2022imagenet3dcc} stress-test vision models under fixed corruption types per image, while test-time adaptation methods~\cite{boudiaf2022lame,liu2021tttpp} address shift without explicit labels. For closed-loop control, the Distracting Control Suite~\cite{stone2021distracting}, DMC-GB~\cite{hansen2021generalization}, and broader generalization benchmarks~\cite{yuan2023rl,ortiz2024dmc,de2024sliding,pumacay2024colosseum} focus on background distractors and apply at most one fixed disturbance per episode. None evaluate intra-episode switching among heterogeneous physical degradations, which our VDCS benchmark targets.

\noindent\textbf{Image restoration and Mixture-of-Experts.} Restoration architectures evolved from U-Nets~\cite{ronneberger2015u} and transformers~\cite{zamir2022restormer} to all-in-one restorers~\cite{li2022allone,potlapalli2023promptir,xia2023diffirefficientdiffusion,conde2024instructirhighquality,guo2024mambairasimple,luo2024daclip,zhang2025perceiveirlearning,wang2023gridformerresidualdense,ren2024moediffir} that handle multiple corruption types in a single network, evaluated under i.i.d.\ clip-level corruptions with pixel-fidelity metrics. Sparse Mixture-of-Experts~\cite{jacobs1991adaptive,jordan1994hierarchical,shazeer2017outrageously,puigcerver2023fromsparseto,zhu2022uniperceivermoe,jiang2024mixtralofexperts,lin2024moellavamixture} routes inputs to specialized sub-networks; in vision it has been applied to restoration~\cite{ren2024moediffir} and classification~\cite{riquelme2021scaling}. We extend these tools rather than compete with them: pairing restoration with a foreground-mask branch so fidelity is concentrated on task-relevant content, and routing at the frame level under Markov-switching corruption without any oracle corruption label.
\section{Preliminaries}
\label{sec:preliminaries}

\paragraph{Visual control from pixels.}
\label{sec:prelim_vrl}

We model pixel-based control as a Partially Observable Markov Decision Process (POMDP) $(\mathcal{S}, \mathcal{A}, \mathcal{T}, R, \Omega, \gamma)$, where $s_t \in \mathcal{S}$ is the latent environment state, $a_t \in \mathcal{A}$ the action, $\mathcal{T}(s_{t+1}\mid s_t,a_t)$ the transition dynamics, and $r_t=R(s_t,a_t)$ the reward. The agent does not observe $s_t$ directly; instead, it receives a pixel observation $x_t\in\Omega$ and learns a history-dependent policy $\pi(a_t\mid x_{\le t})$ maximizing $J(\pi)=\mathbb{E}_{\pi}\!\left[\sum_{t=0}^{\infty}\gamma^t r_t\right]$.

\paragraph{Observation model (clean vs.\ corrupted).}
Let $o_t^{\mathrm{raw}}=\mathcal{R}(s_t)\in[0,255]^{H\times W\times 3}$ denote the clean 8-bit RGB frame rendered by the simulator renderer $\mathcal{R}$. VDCS introduces an exogenous corruption mode index $k_t$ at each time step: let $|\mathcal{K}|$ be the number of corruption types and define the index set $\mathcal{K}:=\{1,\dots,|\mathcal{K}|\}$, with $k_t\in\mathcal{K}$. Given $k_t$, the observed 8-bit frame is generated by
\begin{equation}
x_t^{\mathrm{raw}}=\mathcal{D}_{k_t}\!\left(o_t^{\mathrm{raw}};\,\iota_t,\xi_t\right),
\label{eq:obs_raw}
\end{equation}
where $\mathcal{D}_k$ is the $k$-th degradation operator, $\iota_t\in[0,1]$ is its (mode-dependent)
severity, and $\xi_t$ denotes randomness.
Backbones consume normalized pixels:
\begin{equation}
x_t \;=\; 2(x_t^{\mathrm{raw}}/255)-1 \;\in\; [-1,1]^{H\times W\times 3},
\label{eq:obs_norm}
\end{equation}
The VDCS benchmark specifies the dynamics of $(k_t,\iota_t)$, including Markov switching, severity sampling, and temporal correlation, in Section~\ref{sec:vdcs}. The corruption index $k_t$ is part of the benchmark generation process; it is not provided to ACO-MoE as a router target in our main configuration. We also decompose the clean observation as $O_t=(F_t,B_t)$, where $F_t$ denotes the foreground pixels (agent and task-relevant objects) and $B_t$ the background. In Appendix~\ref{app:theory}, we use uppercase $(S_t,O_t,X_t,K_t)$ to denote random variables and lowercase $(s_t,o_t,x_t,k_t)$ for their realizations.


\paragraph{Backbone world models.}
ACO-MoE is evaluated with two complementary visual-control backbones while leaving their architectures and learning objectives unchanged.

\textbf{DreamerV3~\cite{hafner2023mastering}.}
DreamerV3 learns a recurrent state-space model (RSSM) from pixels and trains an actor--critic purely
in latent imagination. An encoder infers a stochastic latent
$z_t \sim q_\phi(z_t \mid h_t, x_t)$ and a recurrent dynamics model updates
$h_t = f_\phi(h_{t-1}, z_{t-1}, a_{t-1})$.
The world-model objective is
\begin{equation}
{\small
\mathcal{L}_{\text{wm}} = \mathbb{E}\Big[
\underbrace{-\log p_\phi(x_t \mid h_t,z_t)}_{\text{reconstruction}}
 + \underbrace{\beta_{\mathrm{dyn}}\,\mathrm{KL}\!\left[\operatorname{sg}(q_\phi)\,\|\,p_\phi\right]}_{\text{dynamics}}
 + \underbrace{\beta_{\mathrm{rep}}\,\mathrm{KL}\!\left[q_\phi\,\|\,\operatorname{sg}(p_\phi)\right]}_{\text{representation}}
 + \mathcal{L}_{\text{pred}}(r_t,c_t)
\Big],
}
\label{eq:dreamerv3_wm}
\end{equation}
where $q_\phi \equiv q_\phi(z_t \mid h_t,x_t)$, $p_\phi \equiv p_\phi(z_t \mid h_t)$, $\operatorname{sg}(\cdot)$ is stop-gradient, $h_t$ is the deterministic recurrent state, $\beta_{\mathrm{dyn}}$ and $\beta_{\mathrm{rep}}$ are KL balancing weights, $c_t \in \{0,1\}$ is the episode continuation flag, and $\mathcal{L}_{\text{pred}}$ supervises reward and continuation prediction. The reconstruction term can encourage the latent state to retain nuisance visual information when the input observation is corrupted. We discuss this failure mode in Appendix~\ref{app:theory}.

\textbf{TD-MPC2~\cite{hansen2024tdmpc2}.}
TD-MPC2 performs MPC in a latent space learned \emph{without} pixel reconstruction: observations are encoded as $z=h(x)$, trained via joint-embedding and reward/value prediction objectives. This sidesteps reconstruction-driven contamination, but the encoder $h(\cdot)$ still requires clean visual input to produce reliable latent states for planning.

\paragraph{ACO-MoE as clean-background observation adaptation.} 
Following the policy-reuse perspective of visual adaptation methods such as FTR~\cite{wang2025ftr}, we separate visual observation adaptation from downstream control learning. The downstream visual-control backbone is trained in simulation on clean-background observations, where the agent and task-relevant foreground are rendered against a uniform background. ACO-MoE is then pretrained offline to map corrupted RGB observations back to this clean-background observation space. During perturbation evaluation, ACO-MoE is inserted before the downstream encoder and the backbone architecture and learning objective are left unchanged. This design tests whether a clean-background policy or world model can be reused under dynamic visual perturbations through observation adaptation rather than through a new RL algorithm.
\section{Visual Degraded Control Suite (VDCS)}
\label{sec:vdcs}

VDCS extends DMControl with a \emph{Markov-switching} visual corruption process to emulate time-varying disturbances in real control systems, such as robots and autonomous platforms, where weather, illumination, sensor noise, camera-shake motion blur, and bandwidth-driven compression typically \emph{persist for a while} and \emph{switch sporadically}. Concretely, VDCS corrupts the rendered RGB stream by applying the observation model in Eq.~\eqref{eq:obs_raw}, where the corruption mode $k_t$ evolves as a sticky Markov chain and the severity $\iota_t$ is temporally correlated within each persistent segment. The corruption index $k_t$ is part of the VDCS data-generation process and is used to define the benchmark dynamics; it is not provided to ACO-MoE as a router target in our main configuration.

We consider seven physical degradations: rain, haze, snow, motion blur, Gaussian noise, low-light, and JPEG compression. VDCS is evaluated \emph{online} by corrupting frames on the fly during downstream visual-control interaction. To pretrain ACO-MoE, we also generate a paired \emph{offline} dataset of clean/corrupted frame pairs using the same operator family, together with pseudo foreground masks for compositing supervision. This offline dataset provides clean RGB targets and pseudo masks for observation adaptation, but no corruption-label supervision is used for router training. Full benchmark details, parameter values, and operator definitions are given in Appendix~\ref{app:vdcs_details}.
\begin{figure*}[t]
  \centering
  \includegraphics[width=\linewidth]{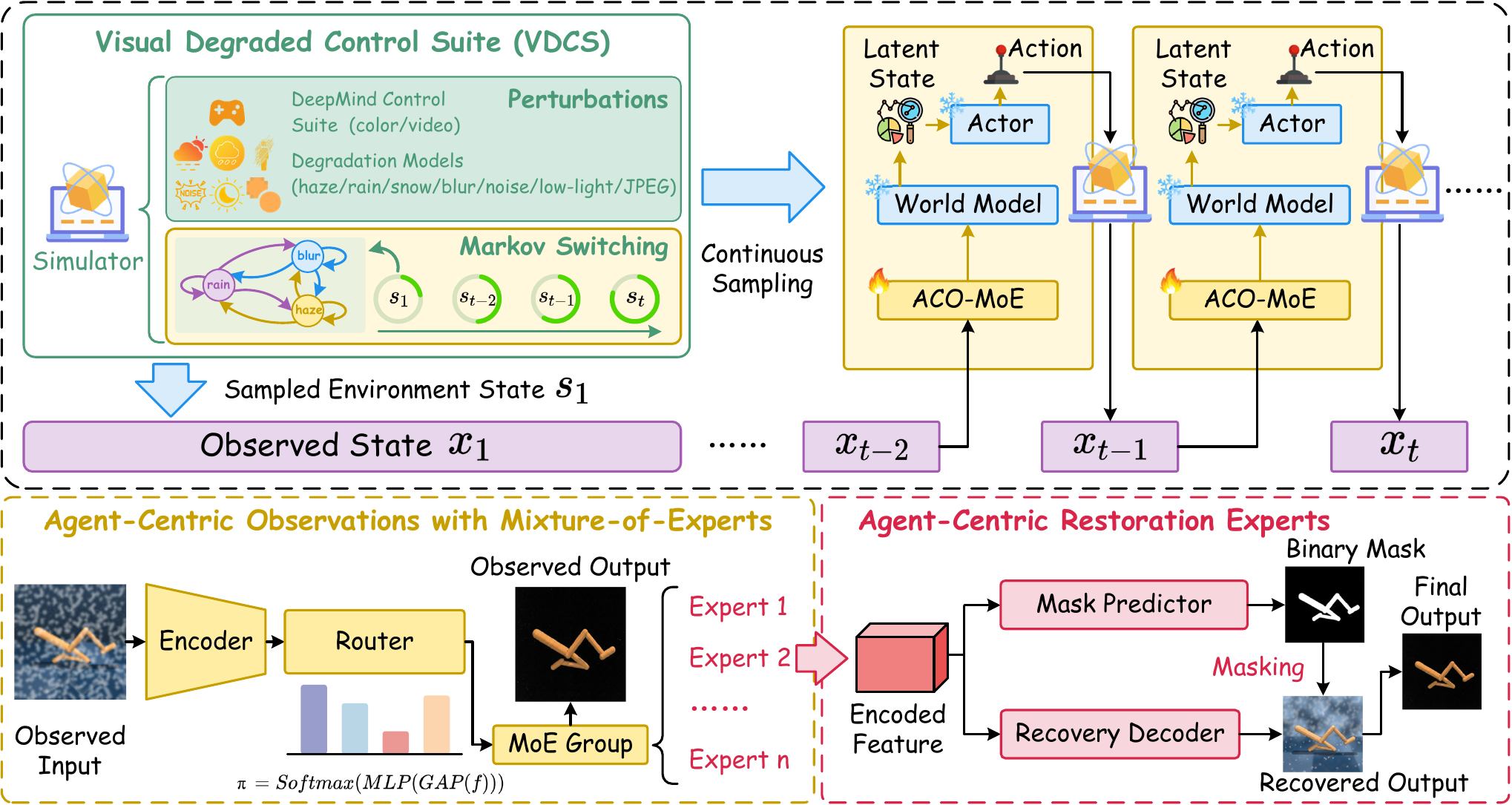}
  \vspace{-5mm}
  \caption{\textbf{VDCS and ACO-MoE overview.} VDCS applies Markov-temporal visual perturbations to DMControl to generate degraded RGB inputs $x_t$. ACO-MoE is a frozen observation adapter placed before a downstream visual-control backbone. Given only the corrupted RGB input, its learned router selects a repair expert that predicts an RGB residual and a foreground mask, and then composes an agent-centric observation on a clean background.}
  \label{fig:framework}
  \vspace{-5mm}
\end{figure*}

\section{ACO-MoE}
\label{sec:method}

As shown in Fig.~\ref{fig:framework}, ACO-MoE is a frozen, plug-and-play observation adapter placed before a downstream policy or world model. It maps corrupted RGB inputs $x_t$ to agent-centric observations $\tilde{x}_t$ that preserve task-relevant foreground information while suppressing nuisance visual variation. Following the policy-reuse perspective of visual adaptation methods such as FTR~\cite{wang2025ftr}, we separate visual observation adaptation from downstream control learning: the visual-control backbone is trained in simulation on clean-background observations, while ACO-MoE is pretrained offline to map corrupted observations back to this clean-background observation space. During perturbation-time evaluation, ACO-MoE receives only corrupted RGB frames and the downstream backbone architecture and learning objective are left unchanged.

ACO-MoE consists of a shared encoder, a compact router, and $N_e$ repair experts. The expert count $N_e$ is an architectural capacity hyperparameter. Experts are not assigned to predefined corruption categories, and the router is not trained with corruption labels in our main configuration. Each expert predicts an RGB residual for corruption repair and a foreground mask for agent-centric composition. The final observation is obtained by compositing the repaired foreground onto a clean background.

\paragraph{Design motivation.}
ACO-MoE is motivated by a simple observation: visual fidelity is useful only insofar as it preserves task-relevant information. Dynamic corruptions alter foreground appearance, background statistics, and the latent representations consumed by downstream models. Rather than redesigning the downstream backbone, ACO-MoE transforms corrupted observations toward a canonical agent-centric form: it repairs foreground pixels when they are degraded and removes background variation through foreground composition. A more formal discussion of reconstruction-induced nuisance retention and foreground bottlenecks is given in Appendix~\ref{app:theory}.

\subsection{Agent-Centric Observation Construction}
\label{sec:agent_centric_method}

Given a corrupted observation $x_t$ normalized by Eq.~\eqref{eq:obs_norm}, ACO-MoE predicts a repaired RGB image $\hat{o}_t\in[-1,1]^{H\times W\times 3}$ and a foreground probability map
\[
m_t = \mathrm{softmax}\!\left(\boldsymbol{\ell}^{\mathrm{mask}}_t\right)_{[\mathrm{fg}]} \in [0,1]^{H\times W}.
\]
The agent-centric observation is
\begin{equation}
  \tilde{x}_t = \hat{o}_t \odot m_t + b \odot (1-m_t),
  \label{eq:agent_centric_def}
\end{equation}
where $b$ is a clean background value, set to $-1$ under the normalized pixel range, and $\odot$ denotes element-wise multiplication with broadcasting across RGB channels.

The clean-background target is generated offline from simulation as $o_t^{\mathrm{cb}} = o_t \odot m_t^{\mathrm{ps}} + b\odot(1-m_t^{\mathrm{ps}})$,
where $o_t$ is the normalized clean rendered frame and $m_t^{\mathrm{ps}}\in\{0,1\}^{H\times W}$ is a pseudo foreground mask obtained under a uniform-background rendering protocol via color-threshold segmentation. The same clean-background observation space is used to train the downstream visual-control backbone. At perturbation-time evaluation, however, ACO-MoE receives only the corrupted RGB observation $x_t$ and predicts $m_t$ from RGB alone; no clean frame, simulator mask, or corruption label is provided.
\begin{wrapfigure}[13]{r}{0.45\columnwidth}
\vspace{-3mm}
  \vspace{-0.5\baselineskip}
  \centering
  \includegraphics[width=\linewidth]{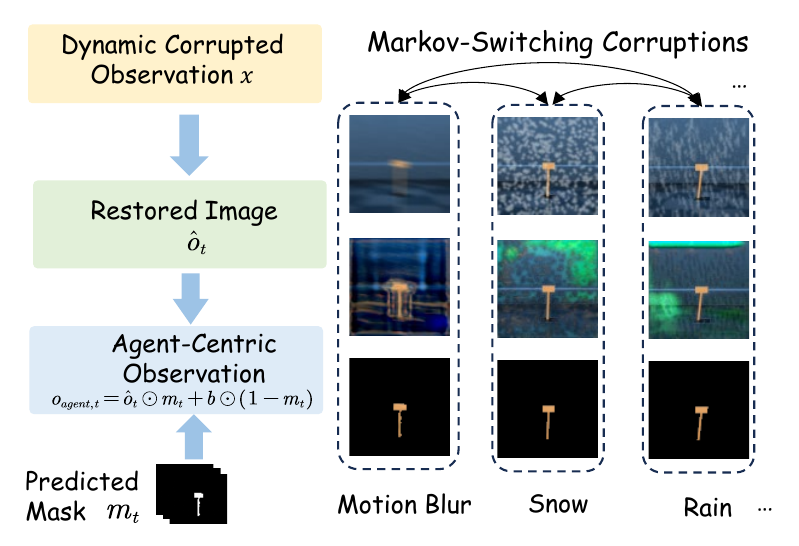}
  \caption{\textbf{Task-oriented observation adaptation.} The RGB branch repairs corrupted visual features, while the mask branch localizes the agent-centric foreground for control-oriented composition.}
  \label{fig:vis_restoration}
  \vspace{-2mm}
\end{wrapfigure}
\subsection{Agent-Centric Repair Experts}
\label{sec:dual_stream}

As shown in Fig.~\ref{fig:framework}, ACO-MoE employs a shared encoder that extracts multi-scale features from $x_t$, a compact router, and $N_e$ repair experts. Each expert $j\in\{1,\dots,N_e\}$ predicts an RGB residual $\Delta_t^{(j)}\in\mathbb{R}^{H\times W\times 3}$ and mask logits $\boldsymbol{\ell}^{\mathrm{mask},(j)}_t\in\mathbb{R}^{H\times W\times 2}$. The router produces weights $\pi_t\in\Delta^{N_e-1}$ over experts. During pretraining, we use soft routing for differentiability:
\begin{equation}
\Delta_t = \sum_{j=1}^{N_e}\pi_{t,j}\Delta_t^{(j)}, \qquad \boldsymbol{\ell}^{\mathrm{mask}}_t = \sum_{j=1}^{N_e}\pi_{t,j}\boldsymbol{\ell}^{\mathrm{mask},(j)}_t .
\label{eq:soft_routing}
\end{equation}
The repaired RGB image is then computed as
\begin{equation}
\hat{o}_t = \mathrm{clip}_{[-1,1]}\!\left(x_t+\tanh(\Delta_t)\right),
\label{eq:rgb_repair}
\end{equation}
where $\mathrm{clip}_{[-1,1]}(\cdot)$ clips values element-wise to $[-1,1]$. At inference, we use hard top-1 routing, $j^\star=\arg\max_j \pi_{t,j}$, and evaluate only the selected expert. Fig.~\ref{fig:vis_restoration} visualizes the resulting RGB repair, mask prediction, and agent-centric observation. Unless otherwise stated, we use a small fixed expert budget $N_e$ as an architectural capacity hyperparameter; $N_e$ is not treated as the number of corruption modes and does not define any expert assignment. 

\subsection{Router and Training Objectives}
\label{sec:router_train}

\paragraph{Offline pretraining data.}
Following the clean-simulation policy-reuse setup in FTR~\cite{wang2025ftr}, we construct paired offline data $(x_t,o_t,o_t^{\mathrm{cb}},m_t^{\mathrm{ps}})$, where $x_t$ is a corrupted RGB observation synthesized by VDCS or DMC-GB, $o_t$ is the corresponding clean rendered frame, $m_t^{\mathrm{ps}}\in\{0,1\}^{H\times W}$ is the pseudo foreground mask from uniform-background rendering, and $o_t^{\mathrm{cb}}$ is the clean-background target defined in Sec.~\ref{sec:agent_centric_method}. The downstream backbone is trained on this clean-background observation space, while ACO-MoE is pretrained to map corrupted observations $x_t$ back to it. 

\paragraph{Training objective.}
The total pretraining loss is
\begin{equation}
  \mathcal{L} = \lambda_{\mathrm{rgb}}\mathcal{L}_{\mathrm{rgb}} + \lambda_{\mathrm{mask}}\mathcal{L}_{\mathrm{mask}} + \lambda_{\mathrm{final}}\mathcal{L}_{\mathrm{final}},
  \label{eq:total_loss}
\end{equation}
where $\mathcal{L}_{\mathrm{rgb}}=\|\hat{o}_t-o_t\|_1$, $\mathcal{L}_{\mathrm{mask}}= \mathrm{CE}(\boldsymbol{\ell}^{\mathrm{mask}}_t,m_t^{\mathrm{ps}})$, and $\mathcal{L}_{\mathrm{final}} = \| \tilde{x}_t - o_t^{\mathrm{cb}} \|_1$. Here $\mathcal{L}_{\mathrm{rgb}}$ supervises RGB repair toward the clean rendered frame, $\mathcal{L}_{\mathrm{mask}}$ supervises foreground prediction, and $\mathcal{L}_{\mathrm{final}}$ supervises the final clean-background agent-centric observation. The shared encoder, router, and per-expert decoders are optimized jointly under this objective. Pretraining hyperparameters and computational overhead are detailed in Appendix~\ref{app:training}.
\section{Experiments}
\label{sec:experiments}


\subsection{Experimental Setup}
\label{sec:exp_setup}
\begin{wrapfigure}{r}{0.5\textwidth}
  \vspace{-8mm}
  \centering
  \includegraphics[width=0.48\textwidth]{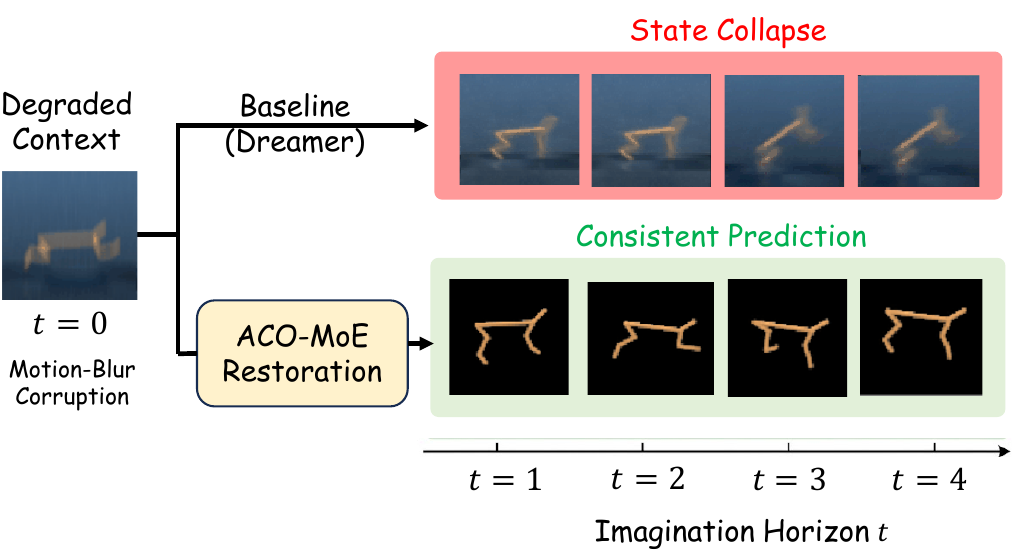}
  \vspace{-10pt}
\caption{\textbf{Robustness of Imagination.}
  Under motion-blur corruption ($t{=}0$), DreamerV3 suffers state collapse 
  on raw degraded input (red); ACO-MoE maintains consistent predictions 
  with restored agent-centric observations (green).}
  \label{fig:imagination_robustness}
  \vspace{-6mm}
\end{wrapfigure}
\paragraph{Benchmarks.}
\textbf{VDCS} extends DMControl with Markov-switching physical degradations (Section~\ref{sec:vdcs}). In the Markov-temporal setting, $k_t$ follows a sticky chain ($p_s{=}0.8$), creating temporally persistent, non-stationary corruptions that stress-test perception and world-model consistency. \textbf{DMC-GB}~\cite{hansen2021generalization} applies background distractions (\texttt{color\_hard}, \texttt{video\_hard}) at the episode level, serving as a complementary test of generalization to a different perturbation regime.

\paragraph{Tasks.} We select 8 DMControl tasks covering diverse morphologies and difficulties: from relatively simple locomotion (\texttt{cartpole\_swingup}, \texttt{walker\_walk}) to highly dynamic whole-body control (\texttt{hopper\_hop}, \texttt{cheetah\_run}) and fine-grained manipulation (\texttt{finger\_spin}, \texttt{finger\_turn\_hard}). The 2 RoboSuite tasks (\texttt{Door}, \texttt{Lift}) extend evaluation to contact-rich manipulation with TD-MPC2~\cite{hansen2024tdmpc2} as the backbone, validating plug-and-play transferability across RL algorithms.

\paragraph{Baselines.}
We compare against two categories of methods. 
\emph{Data-augmentation methods}: DrQ-v2~\cite{yarats2021mastering}, 
SVEA~\cite{hansen2021stabilizing}, SODA~\cite{hansen2021generalization}, 
SGQN~\cite{bertoin2022lookwhereyou}, SimGRL~\cite{song2024simgrl}, 
Q$^2$~\cite{liao2023q2}, which represent the dominant paradigm of learning robust 
representations via augmentation, providing a strong baseline for 
model-free robustness. 
\emph{Task-aware methods}: FTR~\cite{wang2025ftr} is the most relevant 
prior work, which also uses foreground-based adaptation but relies on 
SAM-supervised test-time fine-tuning, making it a direct point of 
comparison for our frozen preprocessing design. 
On DMC-GB we additionally include PAD~\cite{hansen2021pad}, 
DrQ~\cite{kostrikov2020image}, SRM~\cite{huang2022spectrum}, 
and SMG~\cite{zhang2024focusonwhat}. In Appendix~\ref{app:generic_restoration}, we further compare against frozen generic image restorers, including PromptIR~\cite{potlapalli2023promptir}, Restormer~\cite{zamir2022restormer}, AdaIR~\cite{cui2025adair}, and NAFNet~\cite{chu2022nafssr}, to test whether pixel-level restoration alone explains the gains.

\begin{table}[t]
\centering
\caption{Performance comparison on VDCS Markov-temporal perturbations. Results show mean $\pm$ std over evaluation episodes. \textcolor{red}{Red}: best performance; \textcolor{blue}{Blue}: second best.}
\label{tab:vdcs_markov_temporal_results}
\scriptsize
\setlength{\tabcolsep}{3.5pt}
\renewcommand{\arraystretch}{0.9}
\begin{tabular}{l|cccccccc|c}
\toprule
\multicolumn{1}{c|}{\textbf{Task}} & \multicolumn{1}{c}{DrQ-v2} & \multicolumn{1}{c}{SVEA} & \multicolumn{1}{c}{SODA} & \multicolumn{1}{c}{SGQN} & \multicolumn{1}{c}{SimGRL} & \multicolumn{1}{c}{Q$^2$} & \multicolumn{1}{c}{FTR} & \multicolumn{1}{c|}{\textbf{ACO-MoE}} & \multicolumn{1}{c}{$\bigtriangleup$} \\
\multicolumn{1}{c|}{} & & & & & & & & \multicolumn{1}{c|}{\textbf{(Ours)}} & \\
\midrule
cartpole, & 49 & 649 & 615 & 354 & 613 & \textcolor{blue}{683} & 342 & \textcolor{red}{864} & +181 \\
swingup & $\pm${\scriptsize 79} & $\pm${\scriptsize 96} & $\pm${\scriptsize 149} & $\pm${\scriptsize 61} & $\pm${\scriptsize 176} & \textcolor{blue}{$\pm${\scriptsize 230}} & $\pm${\scriptsize 53} & \textcolor{red}{$\pm${\scriptsize 75}} & {\scriptsize 26.5\%} \\
\midrule
finger, & 2 & \textcolor{red}{961} & 307 & \textcolor{blue}{948} & 778 & 379 & 545 & 492 & $-$469 \\
spin & $\pm${\scriptsize 1} & \textcolor{red}{$\pm${\scriptsize 32}} & $\pm${\scriptsize 120} & \textcolor{blue}{$\pm${\scriptsize 87}} & $\pm${\scriptsize 69} & $\pm${\scriptsize 63} & $\pm${\scriptsize 54} & $\pm${\scriptsize 102} & {\scriptsize $-$48.8\%} \\
\midrule
finger, & 68 & 60 & 59 & \textcolor{blue}{287} & 31 & 23 & 105 & \textcolor{red}{882} & +595 \\
turn hard & $\pm${\scriptsize 38} & $\pm${\scriptsize 129} & $\pm${\scriptsize 38} & \textcolor{blue}{$\pm${\scriptsize 89}} & $\pm${\scriptsize 20} & $\pm${\scriptsize 15} & $\pm${\scriptsize 37} & \textcolor{red}{$\pm${\scriptsize 65}} & {\scriptsize 207.3\%} \\
\midrule
hopper, & 4 & 69 & 475 & 5 & 603 & 310 & \textcolor{blue}{659} & \textcolor{red}{910} & +251 \\
stand & $\pm${\scriptsize 3} & $\pm${\scriptsize 89} & $\pm${\scriptsize 73} & $\pm${\scriptsize 3} & $\pm${\scriptsize 80} & $\pm${\scriptsize 178} & \textcolor{blue}{$\pm${\scriptsize 75}} & \textcolor{red}{$\pm${\scriptsize 50}} & {\scriptsize 38.1\%} \\
\midrule
hopper, & 3 & 41 & 44 & 6 & 3 & \textcolor{blue}{165} & 109 & \textcolor{red}{339} & +174 \\
hop & $\pm${\scriptsize 2} & $\pm${\scriptsize 27} & $\pm${\scriptsize 15} & $\pm${\scriptsize 4} & $\pm${\scriptsize 2} & \textcolor{blue}{$\pm${\scriptsize 50}} & $\pm${\scriptsize 15} & \textcolor{red}{$\pm${\scriptsize 46}} & {\scriptsize 105.5\%} \\
\midrule
cheetah, & 6 & 221 & 158 & 166 & 237 & \textcolor{blue}{309} & 4 & \textcolor{red}{700} & +391 \\
run & $\pm${\scriptsize 3} & $\pm${\scriptsize 43} & $\pm${\scriptsize 62} & $\pm${\scriptsize 35} & $\pm${\scriptsize 85} & \textcolor{blue}{$\pm${\scriptsize 67}} & $\pm${\scriptsize 2} & \textcolor{red}{$\pm${\scriptsize 95}} & {\scriptsize 126.5\%} \\
\midrule
walker, & 45 & 837 & 171 & 792 & \textcolor{blue}{845} & 443 & 532 & \textcolor{red}{963} & +118 \\
walk & $\pm${\scriptsize 24} & $\pm${\scriptsize 42} & $\pm${\scriptsize 58} & $\pm${\scriptsize 87} & \textcolor{blue}{$\pm${\scriptsize 97}} & $\pm${\scriptsize 85} & $\pm${\scriptsize 38} & \textcolor{red}{$\pm${\scriptsize 87}} & {\scriptsize 14.0\%} \\
\midrule
walker, & 25 & \textcolor{blue}{279} & 128 & 229 & 188 & 57 & 232 & \textcolor{red}{738} & +459 \\
run & $\pm${\scriptsize 13} & \textcolor{blue}{$\pm${\scriptsize 59}} & $\pm${\scriptsize 22} & $\pm${\scriptsize 74} & $\pm${\scriptsize 64} & $\pm${\scriptsize 37} & $\pm${\scriptsize 31} & \textcolor{red}{$\pm${\scriptsize 56}} & {\scriptsize 164.5\%} \\
\midrule
\rowcolor{gray!12}
\multicolumn{1}{c|}{\textbf{Average}} & 25.3 & 389.6 & 244.6 & 348.4 & \textcolor{blue}{412.3} & 296.1 & 316.1 & \textcolor{red}{736.0} & +323.7 \\
\rowcolor{gray!12}
\multicolumn{1}{c|}{} & $\pm${\scriptsize 25.2} & $\pm${\scriptsize 335.7} & $\pm${\scriptsize 211.0} & $\pm${\scriptsize 333.1} & \textcolor{blue}{$\pm${\scriptsize 314.9}} & $\pm${\scriptsize 202.9} & $\pm${\scriptsize 226.5} & \textcolor{red}{$\pm${\scriptsize 205.6}} & {\scriptsize 78.5\%} \\
\bottomrule
\end{tabular}
\vspace{-6mm}
\end{table}

\paragraph{Training settings.}
All methods train for 1M environment steps (DMControl) or 200K steps 
(RoboSuite). Evaluation uses 5 seeds with 10 episodes per seed, reporting 
mean~$\pm$~std of episode returns. More implementation details are in 
Appendix~\ref{app:training}.

\subsection{Main Results on VDCS}
\label{sec:exp_vdcs}

Table~\ref{tab:vdcs_markov_temporal_results} reports performance under 
VDCS Markov-temporal perturbations. ACO-MoE achieves the highest average 
score of 736.0, outperforming the best baseline SimGRL (412.3) by 78.5\% 
relative. Strong gains appear on tasks where physical corruptions most 
severely disrupt low-level texture cues: \texttt{finger\_turn\_hard} 
(+207.3\%), \texttt{cheetah\_run} (+126.5\%), and \texttt{hopper\_hop} 
(+105.5\%). 

Fig.~\ref{fig:imagination_robustness} further illustrates why: 
without preprocessing, DreamerV3's reconstruction objective causes state 
collapse under corrupted context, while ACO-MoE produces consistent future 
predictions by decoupling perception from perturbation before world-model 
inference.

Across all eight tasks, ACO-MoE recovers 95.3\% of DreamerV3's clean performance on average (Appendix~\ref{app:clean_recovery}). The only clear exception is \texttt{finger\_spin}, where the limitation comes mainly from the downstream backbone: DreamerV3 reaches only 543 even on clean observations, and ACO-MoE recovers 90.6\% of this clean score. Replacing DreamerV3 with DrQ-v2 yields 95.6\% recovery on the same task, supporting that the failure is not due to observation adaptation (Appendix~\ref{app:backbone_generalization}). Appendix~\ref{app:generic_restoration} further shows that ACO-MoE outperforms the strongest frozen generic restorer, AdaIR, by $+220.3$ average return (+40.3\%), supporting the need for task-aware foreground composition. Beyond the VDCS corruption families used for adapter pretraining, ACO-MoE also shows meaningful zero-shot generalization to unseen OOD visual perturbations, with both the adapter and controller kept frozen and no test-time adaptation (Appendix~\ref{app:unseen_ood}).

\begin{figure*}[t]
    \centering
    \includegraphics[width=1.0\textwidth]{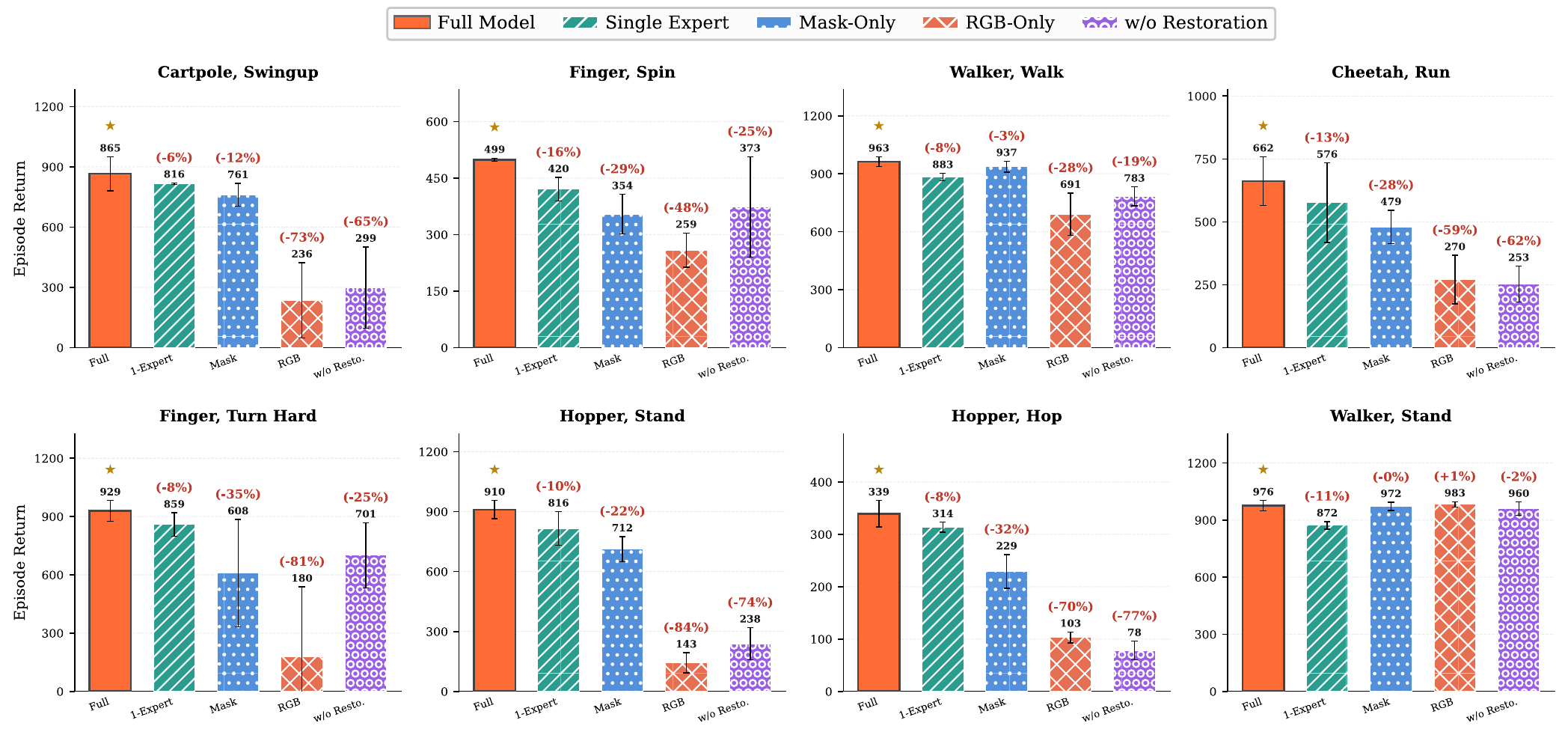}
    \vspace{-8mm}
    \caption{\textbf{Component ablation across 8 VDCS tasks.}
\textbf{Single Expert}: replace ACO-MoE with a capacity-matched U-Net.
\textbf{Mask-Only}: disable RGB repair loss.
\textbf{RGB-Only}: disable mask loss.
\textbf{w/o Repair}: remove RGB repair at inference while keeping foreground
composition. Red numbers denote relative drops w.r.t.\ the full model.}
    \label{fig:ablation_bars}
    \vspace{-3mm}
\end{figure*}

\subsection{Ablations}
\label{sec:exp_ablation}
\begin{wraptable}[10]{r}{0.45\linewidth}
  \vspace{-9mm}
  \centering
  \caption{DMC-GB averaged results.
    \textcolor{red}{Red}: best; \textcolor{blue}{Blue}: second best.
    FTR$^\dagger$ not evaluated on Color Hard.}
  \label{tab:dmcgb_summary}
  \scriptsize
  \setlength{\tabcolsep}{5pt}
  \renewcommand{\arraystretch}{1.05}
  \begin{tabular}{lcc}
    \toprule
    \textbf{Method}
      & \textbf{Video Hard}
      & \textbf{Color Hard} \\
    \midrule
    SGQN
      & 590\,$\pm$\,275
      & 593\,$\pm$\,223 \\
    SMG
      & 749\,$\pm$\,261
      & \textcolor{blue}{697\,$\pm$\,232} \\
    FTR$^\dagger$
      & \textcolor{blue}{762\,$\pm$\,209}
      & --- \\
    \textbf{ACO-MoE}
      & \textcolor{red}{\textbf{809\,$\pm$\,194}}
      & \textcolor{red}{\textbf{818\,$\pm$\,196}} \\
    \midrule
    \rowcolor{gray!12}
    \textit{vs.\ 2nd best}
      & \textit{+6\%}
      & \textit{+17\%} \\
    \bottomrule
  \end{tabular}
\end{wraptable}
Fig.~\ref{fig:ablation_bars} isolates the main components of ACO-MoE on all 8 VDCS tasks. Replacing the expert router with a capacity-matched single U-Net (\textbf{Single Expert}) reduces average return by $10.9\%$, suggesting that input-adaptive expert selection is more effective than adding comparable capacity to one decoder. Removing RGB repair (\textbf{Mask-Only}) causes a smaller drop ($10.6\%$), while removing foreground supervision (\textbf{RGB-Only}) is much more harmful ($57.1\%$), indicating that foreground composition is the dominant factor and RGB repair provides complementary gains under pixel-level degradations. Removing the repair branch at inference (\textbf{w/o Repair}) also stays below the full model, confirming that masking alone is insufficient under strong physical corruptions. Additional ablations on freezing, no-preprocessing, and DMC-GB are reported in Appendix~\ref{app:results_ablations}.

\subsection{Analysis}
\label{sec:exp_analysis}


\paragraph{Generalization on RoboSuite.}
Fig.~\ref{fig:vdcs_video_comparison} evaluates contact-rich RoboSuite tasks with TD-MPC2 as the downstream backbone. ACO-MoE improves over Q$^2$ by +344\% on \texttt{Door} and +167\% on \texttt{Lift} under VDCS, and over FTR by +34\% and +75\% under Video Hard. These results show that the same frozen observation adapter can transfer from DreamerV3-style DMControl evaluation to TD-MPC2-based manipulation without changing the downstream control backbone. 

\begin{figure}[t]
\centering
\includegraphics[width=\linewidth]{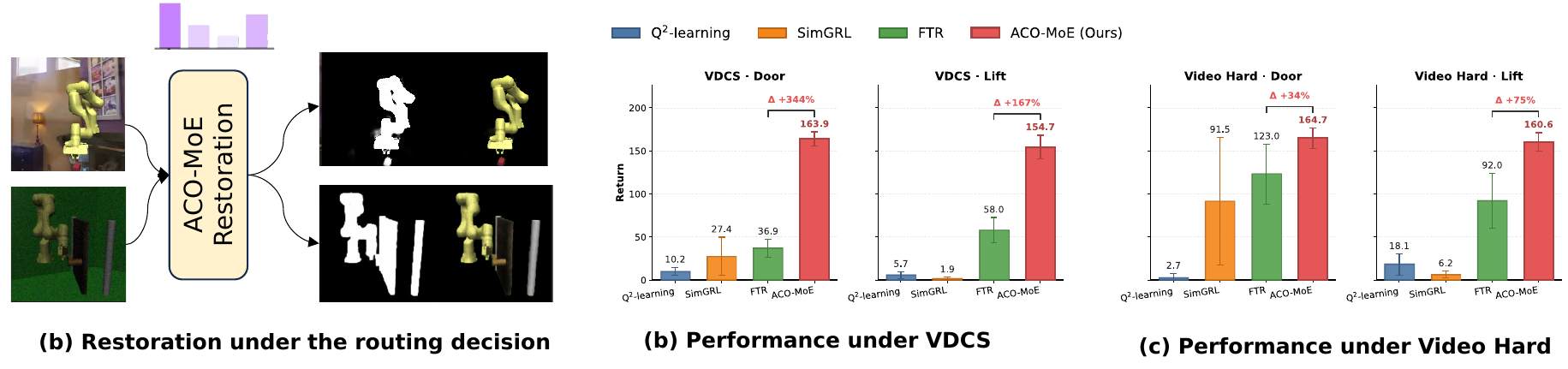}
\vspace{-3mm}
\caption{\textbf{RoboSuite results.}
(a)~Restored RGB, mask, and agent-centric observation under Video Hard
and Low Light. (b)~VDCS: +344\% on Door, +167\% on Lift vs.\
Q$^2$. (c)~Video Hard: +34\% on Door, +75\% on Lift vs.\ FTR.}
\label{fig:vdcs_video_comparison}
 \vspace{-6mm}
\end{figure}

\paragraph{Generalization on DMC-GB.}
\label{sec:exp_dmcgb}
Table~\ref{tab:dmcgb_summary} evaluates generalization on DMC-GB~\cite{hansen2021generalization} across five tasks (\texttt{cartpole\_swingup}, \texttt{finger\_spin}, \texttt{walker\_stand}, \texttt{walker\_walk}, \texttt{cheetah\_run}). 
ACO-MoE achieves state-of-the-art averages of \textbf{809} (video-hard) and \textbf{818} (color-hard), outperforming the second-best method FTR by \textbf{6.2\%} on video-hard and SMG by \textbf{17.4\%} on color-hard, respectively. 


These corruptions differ fundamentally from VDCS: background pixels are replaced while foreground remains intact. Yet ACO-MoE benefits from the same mechanism: compositing the agent foreground onto a uniform black background eliminates nuisance variation regardless of its origin. The \texttt{finger\_spin} underperformance persists for the backbone reason identified in Section~\ref{sec:exp_vdcs}. Full per-task results against 10+ recent baselines are reported in Appendix~\ref{app:results_dmcgb}.




\section{Conclusion, Limitation, and Future Work}
\label{sec:conclusion}

We studied visual RL under dynamic corruptions and showed that reconstruction-based world models entangle corruption factors in latent space, motivating agent-centric preprocessing. We introduced ACO-MoE, a plug-and-play dual-stream Mixture-of-Experts that restores task-relevant RGB and masks nuisance background, producing corruption-invariant observations for model-free and model-based backbones. Our analysis formalizes reconstruction-induced contamination and foreground anchoring as a valid design principle. Across VDCS, DMC-GB, RoboSuite, and unseen OOD perturbations, ACO-MoE improves performance, outperforms frozen generic restorers, generalizes across regimes, and transfers from DreamerV3 to TD-MPC2 and DrQ-v2. Ablations confirm the importance of input-adaptive expert capacity, dual-stream outputs, and frozen training.

However, ACO-MoE still relies on offline-trained experts, and severe unseen perturbations that destroy spatial structure or occlude task-relevant foreground remain challenging. Foreground extraction also degrades on fine-grained tasks with sub-pixel-scale targets under strong pixel-level degradations. While robust observation adaptation may improve visual-control reliability under adverse sensing, simulation robustness should not be viewed as a deployment-ready safety guarantee. Future work includes open-set routing, real-world pseudo masks/pairs (Appendix~\ref{app:real_world_data}), physical-robot validation, failure monitoring, and conservative fallback mechanisms.

\bibliographystyle{refstyle}
\bibliography{paper_references}

\clearpage
\appendix
\phantomsection
\addcontentsline{toc}{section}{Appendix Contents}
\section*{Appendix Contents}
\vspace{-0.3em}


\vspace{0.6em}
\noindent
\begin{tabular}{@{}p{0.04\linewidth}p{0.92\linewidth}@{}}
\textbf{A.} & \hyperlink{appnav:vdcs}{VDCS: Full Degradation Operator Definitions} \\
\addlinespace[0.25em]
\textbf{B.} & \hyperlink{appnav:proofs}{Theoretical Analysis and Proofs} \\
\addlinespace[0.25em]
\textbf{C.} & \hyperlink{appnav:network}{ACO-MoE Architecture Details} \\
\addlinespace[0.25em]
\textbf{D.} & \hyperlink{appnav:data}{Offline Dataset Generation} \\
\addlinespace[0.25em]
\textbf{E.} & \hyperlink{appnav:training}{Training Settings and Computational Cost} \\
\addlinespace[0.25em]
\textbf{F.} & \hyperlink{appnav:results}{Additional Experimental Results and Ablations} \\
\end{tabular}

\vspace{0.8em}
\noindent\rule{\linewidth}{0.4pt}
\vspace{0.2em}


\phantomsection\hypertarget{appnav:vdcs}{}
\section{VDCS: Full Degradation Operator Definitions}
\label{app:vdcs_details}

This appendix provides the full VDCS benchmark, parameter settings, and mathematical definitions
of the seven degradation operators used in Section~\ref{sec:vdcs}. We follow the notation in
Section~\ref{sec:preliminaries}: the clean frame is $o_t^{\mathrm{raw}}=\mathcal{R}(s_t)\in[0,255]^{H\times W\times 3}$,
and the corrupted frame is generated as $x_t^{\mathrm{raw}}=\mathcal{D}_{k_t}(o_t^{\mathrm{raw}};\iota_t,\xi_t)$
(Eq.~\eqref{eq:obs_raw}). Operators are defined in the 8-bit space; normalization to $[-1,1]$ is done
by Eq.~\eqref{eq:obs_norm}. We use $\odot$ for element-wise multiplication (broadcast across channels).

\subsection{Markov-Switching Settings and Severity Dynamics}

\paragraph{Mode switching.}
Let $|\mathcal{K}|$ be the number of corruption types and $\mathcal{K}:=\{1,\dots,|\mathcal{K}|\}$.
The corruption mode $k_t\in\mathcal{K}$ follows a finite-state Markov chain with transition matrix
$\Pi\in[0,1]^{|\mathcal{K}|\times |\mathcal{K}|}$, where $\Pi_{ij}=\Pr(k_t{=}j\mid k_{t-1}{=}i)$ and $\sum_{j=1}^{|\mathcal{K}|}\Pi_{ij}=1$.
We use a sticky chain with $\Pi_{ii}=p_s$ and $\Pi_{ij}=(1-p_s)/(|\mathcal{K}|-1)$ for $j\neq i$ (default $p_s=0.8$).

\paragraph{Severity sampling.}
Each type $k\in\mathcal{K}$ has a base severity $\bar{\iota}_k\in[0,1]$ and a jitter band $\delta=0.1$.
When a mode is (re)selected (at $t=0$ or when $k_t\neq k_{t-1}$), we sample
\begin{equation}
\iota_t \sim \mathcal{U}\!\bigl[\iota_{k_t,\min},\ \iota_{k_t,\max}\bigr],
\end{equation}
where $\mathcal{U}[\cdot]$ denotes the uniform distribution on an interval.

\paragraph{Temporal correlation within a segment.}
If the mode persists ($k_{t+1}=k_t$), severity follows a small random walk:
\begin{equation}
\label{eq:intensity_sample}
\iota_{t+1}=\mathrm{clip}\!\left(\iota_t+\eta_t,\ \iota_{k_t,\min},\ \iota_{k_t,\max}\right),
\qquad \eta_t\sim\mathcal{N}(0,0.02^2),
\end{equation}
with mode-specific bounds
\begin{equation}
\iota_{k,\min}=\max\!\bigl(0.1,\ (1-\delta)\bar{\iota}_k\bigr),\qquad
\iota_{k,\max}=\min\!\bigl(1,\ (1+\delta)\bar{\iota}_k\bigr),
\end{equation}
and $\mathrm{clip}(u,a,b)=\min\{\max\{u,a\},b\}$.

\begin{table}[hbpt]
\centering
\caption{VDCS physical degradations and base severities $\bar{\iota}_k$.}
\label{tab:vdcs_degradations}
\small
\setlength{\tabcolsep}{6pt}
\renewcommand{\arraystretch}{1.05}
\begin{tabular}{l l c}
\toprule
\textbf{Degradation} & \textbf{Physical meaning} & $\bar{\iota}_k$ \\
\midrule
Rain & Streak overlay (weather) & 0.6 \\
Haze & Atmospheric scattering / haze & 0.6 \\
Snow & Flake overlay (weather) & 0.6 \\
Motion blur & Directional blur kernel & 0.35 \\
Gaussian noise & Additive sensor noise & 0.5 \\
Low-light & Exposure reduction + noise & 0.7 \\
JPEG compression & Lossy block artifacts & 0.7 \\
\bottomrule
\end{tabular}
\end{table}

\subsection{VDCS Degradation Operators}

Below we provide full operator definitions. In each case, the operator randomness (e.g., sampled masks,
noise, kernel angle, JPEG quality) is absorbed into $\xi_t$ in Eq.~\eqref{eq:obs_raw}. We use
$\mathrm{clip}_{[0,255]}(u)=\min\{\max\{u,0\},255\}$ applied element-wise.

\paragraph{Rain and snow (alpha blending).}
Rain overlays stochastic streaks and snow overlays stochastic flakes, producing a normalized weather mask
$M_t\in[0,1]^{H\times W\times 1}$.
For rain, draw $N_r=\lfloor 500\,\iota_t\rfloor$ line segments with length $\ell\in[\ell_{\min},\ell_{\max}]$
(we use $\ell_{\min}=3,\ \ell_{\max}=10$) and angles near vertical, then blur to form $M_t$.
For snow, draw $N_s=\lfloor 1000\,\iota_t\rfloor$ circular flakes with radius $r\in\{1,2\}$ and brightness
$b\in[200,255]$, then blur to form $M_t$.
Both use alpha blending:
\begin{equation}
x_t^{\mathrm{raw}} = (1-\gamma M_t)\odot o_t^{\mathrm{raw}} + \gamma M_t \odot c_w,
\end{equation}
where $c_w=(255,255,255)$ is the weather color, and $\gamma$ is opacity (we use $\gamma=0.3$ for rain and
$\gamma=0.5$ for snow).

\paragraph{Haze (atmospheric scattering).}
Haze simulates atmospheric scattering with a vertical density profile. For row index $y\in\{0,\dots,H-1\}$, define $\rho(y)=(y/H)^{1/2}$ and broadcast it to obtain $\rho_t\in[0,1]^{H\times W\times 1}$. Using haze color
$c_f=(200,200,200)$ and scale $\alpha=0.7\,\iota_t$, we render
\begin{equation}
x_t^{\mathrm{raw}} = (1-\alpha \rho_t)\odot o_t^{\mathrm{raw}} + \alpha \rho_t \odot c_f.
\end{equation}

\paragraph{Motion blur (directional convolution).}
Motion blur models camera/object motion as a directional convolution. Sample an angle
$\theta\sim\mathcal{U}[0,2\pi)$ and define a normalized line kernel $\mathbf{k}_{\theta,\ell}$ with length
\begin{equation}
\ell = \ell_{\min} + \iota_t(\ell_{\max}-\ell_{\min}),
\qquad \ell_{\min}=5,\ \ell_{\max}=25.
\end{equation}
Then
\begin{equation}
x_t^{\mathrm{raw}} = \mathbf{k}_{\theta,\ell} * o_t^{\mathrm{raw}},
\end{equation}
where $*$ denotes per-channel 2D convolution.

\paragraph{Gaussian noise (sensor noise).}
Gaussian noise injects pixel-wise sensor-level stochasticity. Sample $n_t\sim\mathcal{N}(0,\sigma^2 I)$ with
\begin{equation}
\sigma=\iota_t\sigma_{\max}, \qquad \sigma_{\max}=25,
\end{equation}
where $I$ is the identity over pixel dimensions. Then
\begin{equation}
x_t^{\mathrm{raw}} = \mathrm{clip}_{[0,255]}\!\left(o_t^{\mathrm{raw}} + n_t\right).
\end{equation}

\paragraph{Low-light (exposure reduction + noise).}
Low-light reduces exposure and amplifies sensor noise. Define a brightness factor
\begin{equation}
\nu = 1-\iota_t(1-\nu_{\min}), \qquad \nu_{\min}=0.2
\end{equation}
and sample $n_t\sim\mathcal{N}(0,(\iota_t\sigma_{\ell})^2 I)$ with $\sigma_{\ell}=15$. Then
\begin{equation}
x_t^{\mathrm{raw}} = \mathrm{clip}_{[0,255]}\!\left(\nu\,o_t^{\mathrm{raw}} + n_t\right).
\end{equation}

\paragraph{JPEG compression (lossy encoding).}
JPEG compression introduces block artifacts via encode--decode. Let $q$ be the JPEG quality:
\begin{equation}
q = q_{\max}-\iota_t(q_{\max}-q_{\min}),
\qquad q_{\min}=10,\ q_{\max}=90,
\end{equation}
and denote by $\mathcal{J}_q(\cdot)$ the JPEG encode--decode operator at quality $q$. Then
\begin{equation}
x_t^{\mathrm{raw}} = \mathcal{J}_q\!\left(o_t^{\mathrm{raw}}\right).
\end{equation}

\phantomsection\hypertarget{appnav:proofs}{}

\section{Theoretical Analysis and Proofs}
\label{app:theory}

\subsection{Formal Setup and Main Results}
\label{app:theory_main}
\paragraph{Setup.}
We use the random-variable notation introduced in Section~\ref{sec:preliminaries}: uppercase denotes random variables and lowercase their realizations. Let $\mathcal{X}:=\{0,1,\ldots,255\}^{H\times W\times 3}$ be the finite 8-bit image alphabet. We assume the corruption process is \emph{exogenous}: $K_t\perp S_t$, meaning the active corruption type (e.g., rain, haze, or camera-shake blur) is determined by external environmental conditions independently of the task state. This assumption holds by design in VDCS, where all corruptions are synthetic and scene-agnostic. In particular, object-motion blur, where the blur kernel depends on the agent's velocity and therefore on $S_t$, is excluded from the VDCS corruption set.\footnote{VDCS motion-blur corruption is implemented as a synthetic global blur kernel with randomly chosen direction and severity, independent of any object's motion in the scene.} We further assume balanced modes with $p(K_t=k) = 1/|\mathcal{K}|$.

A (measurable) mode predictor is a function $\varphi:\mathcal{X}\rightarrow \mathcal{K}$. We assume \emph{mode-identifiability}: for any state realization $s$,
\begin{equation}
P_e(s)\triangleq\inf_{\varphi}\Pr\!\bigl(\varphi(X_t)\neq K_t\mid S_t=s\bigr)
\;<\;1-\tfrac{1}{|\mathcal{K}|},
\label{eq:mode_ident}
\end{equation}
i.e., the Bayes error of predicting the active corruption from the corrupted observation is strictly better than random guessing. 

\paragraph{Reconstruction contaminates representations.}

Intuitively, accurately reconstructing corrupted pixels forces the latent to encode corruption-specific patterns, not just task content. Formally, reconstruction-based world models learn an encoder $Z_t=f_\phi(X_t)$ and decoder $\hat{X}_t=g_\psi(Z_t)$ by minimizing a reconstruction distortion $\mathbb{E}[d(X_t,\hat{X}_t)]\le\epsilon$, where $d:\mathcal{X}\times\mathcal{X}\to[0,1]$ is a normalized distortion (e.g., per-pixel 8-bit error normalized to $[0,1]$) and $\epsilon$ is a target distortion budget. The following proposition shows that accurate reconstruction \emph{forces} the latent to retain corruption information, regardless of the encoder architecture.

\begin{proposition}[Representation Contamination]
\label{prop:contamination}
Under~\eqref{eq:mode_ident}, $K_t\perp S_t$, $p(K_t=k)=1/|\mathcal{K}|$, and 8-bit observations, if $\mathbb{E}[d(X_t,\hat{X}_t)]\le\epsilon$ with $\epsilon\in(0,\tfrac{1}{2}]$, then
\begin{equation}
I(Z_t;\,K_t\mid S_t)
\;\ge\;
I(X_t;\,K_t\mid S_t)
\;-\;
C(\epsilon),
\label{eq:contamination}
\end{equation}
where $I(X_t;K_t\mid S_t)$ quantifies how much corruption-mode information is present in the input beyond the clean state (and is strictly positive by~\eqref{eq:mode_ident}), while $C(\epsilon)=\epsilon\log |\mathcal{K}|+h(\epsilon)$ is a reconstruction slack term that vanishes as $\epsilon\to 0$. Here $h(p)=-p\log p-(1-p)\log(1-p)$ is the binary entropy function (base-2 logs). Hence $I(Z_t;K_t\mid S_t)>0$ for sufficiently small $\epsilon$: \textbf{any accurate reconstruction latent inevitably encodes the active corruption mode.}
\end{proposition}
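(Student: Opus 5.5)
The plan is to chain the data-processing inequality with a Fano-type slack argument. Throughout we condition on $S_t$.

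\textbf{Step 1 (data processing).} Since $Z_t=f_\phi(X_t)$ and $\hat{X}_t=g_\psi(Z_t)$ are deterministic, we have the Markov chain $K_t \to X_t \to Z_t \to \hat{X}_t$ conditionally on $S_t$. Hence
\[
I(Z_t;K_t\mid S_t)\ \ge\ I(\hat{X}_t;K_t\mid S_t).
\]
It therefore suffices to show $I(\hat{X}_t;K_t\mid S_t)\ge I(X_t;K_t\mid S_t)-C(\epsilon)$.

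\textbf{Step 2 (chain rule with an error indicator).} Let $E=\mathbf{1}\{\hat{X}_t\neq X_t\}$. By the chain rule,
\[
I(X_t;K_t\mid S_t)\le I(X_t,\hat{X}_t;K_t\mid S_t)=I(\hat{X}_t;K_t\mid S_t)+I(X_t;K_t\mid \hat{X}_t,S_t).
\]
Enlarging by $E$ gives
\[
I(X_t;K_t\mid \hat{X}_t,S_t)\le I(X_t,E;K_t\mid\hat{X}_t,S_t)\le H(E)+I(X_t;K_t\mid \hat{X}_t,S_t,E).
\]
We split the last term on $E$. On $\{E=0\}$, $X_t=\hat{X}_t$ is determined by the conditioning, so the contribution is $0$. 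On $\{E=1\}$, the term is at most $H(K_t)=\log|\mathcal{K}|$ by the uniform-mode assumption. Altogether, $I(X_t;K_t\mid\hat{X}_t,S_t)\le h(\Pr(E))+\Pr(E)\log|\mathcal{K}|$.

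\textbf{Step 3 (from distortion to error probability).} This is the step I expect to be the main obstacle, because the statement only controls $\mathbb{E}[d]$, while the argument above needs $\Pr(E)\le\epsilon$. I would use the finiteness of the 8-bit alphabet. On $\mathcal{X}$, the normalized distortion satisfies $d(x,x')=0$ iff $x=x'$, so I would adopt the convention (implicit in the statement's use of a normalized 8-bit distortion) that $d(x,x')\ge \mathbf{1}\{x\neq x'\}$. Equivalently, $d$ is rescaled by its minimum nonzero value on the finite alphabet, which only changes constants. Then $\Pr(E)=\mathbb{E}[\mathbf{1}\{X_t\neq\hat{X}_t\}]\le\mathbb{E}[d]\le\epsilon$. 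Because $\epsilon\le\tfrac12$ and $h$ is increasing on $[0,\tfrac12]$, we get $h(\Pr(E))+\Pr(E)\log|\mathcal{K}|\le C(\epsilon)$. Combining with Steps 1–2 yields \eqref{eq:contamination}. I would state the domination assumption on $d$ explicitly in the proof.

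\textbf{Step 4 (strict positivity).} Suppose $I(X_t;K_t\mid S_t=s)=0$ for some $s$. Then $K_t$ is independent of $X_t$ given $S_t=s$. Since $K_t\perp S_t$ and $K_t$ is uniform, $K_t\mid(X_t,S_t=s)$ is uniform. Any predictor $\varphi$ then has error exactly $1-1/|\mathcal{K}|$, contradicting \eqref{eq:mode_ident}. Hence $I(X_t;K_t\mid S_t)>0$. Since $C(\epsilon)\to0$ as $\epsilon\to0$, the right-hand side of \eqref{eq:contamination} is positive for small enough $\epsilon$, which gives the final claim.
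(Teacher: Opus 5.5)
Your proof is correct and is essentially the paper's proof. It uses the data-processing inequality to pass from $Z_t$ to $\hat X_t$, the mismatch indicator $E$ with the chain rule to get the slack $h(\Pr(E{=}1))+\Pr(E{=}1)\log|\mathcal{K}|$, and the domination property $\mathbf{1}\{X_t\neq\hat X_t\}\le d(X_t,\hat X_t)$, which the paper likewise simply assumes as its ``mismatch property''. The differences are cosmetic: you use a global bound $H(E\mid\hat X_t,S_t)\le H(E)$ instead of per-state bounds plus Jensen, and a direct uniform-posterior argument instead of Fano for positivity.

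Drop the remark that rescaling $d$ by its smallest nonzero value ``only changes constants''. With $n=3HW$ channel entries, rescaling turns the slack into $C(255n\,\epsilon)$, not $C(\epsilon)$. Moreover, for normalized per-pixel $\ell_1$ the stated bound genuinely fails: take two modes that write $0$ versus $255$ into a single channel entry, and let the encoder overwrite that entry with $128$. This gives $I(X_t;K_t\mid S_t)=1$ bit but $I(Z_t;K_t\mid S_t)=0$ at distortion below $1/n$, where $C(\epsilon)\ll 1$. So domination must be an explicit hypothesis, exactly as you end up stating it.
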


\noindent The proof is in Appendix~\ref{app:proofs}. 
Proposition~\ref{prop:contamination} identifies a structural conflict: the smaller the reconstruction error $\epsilon$, the smaller $C(\epsilon)$ and thus the tighter the lower bound in~\eqref{eq:contamination}, i.e., \emph{better reconstruction implies more corruption entanglement}.

\paragraph{Information-bottleneck (IB) objective and its connection to
Proposition~\ref{prop:contamination}.} Let $Y_t\triangleq\arg\max_a \pi^\star(a\mid S_t)$ denote the optimal action label. Proposition~\ref{prop:contamination} shows reconstruction pushes $I(Z_t;K_t\mid S_t)$ \emph{up}. The IB principle instead seeks to push nuisance dependence \emph{down} by compressing $X_t$ while preserving $Y_t$:
\begin{equation}
  \min_{p(Z_t\mid X_t)}\;I(Z_t;\,X_t)
  \quad\text{s.t.}\quad I(Z_t;\,Y_t)\ge\beta.
  \label{eq:ib_standard}
\end{equation}
For a deterministic encoder $Z_t=f_\phi(X_t)$, $H(Z_t\mid X_t)=0$ and thus $I(Z_t;Y_t\mid X_t)=0$; the chain rule yields
\begin{equation}
  I(Z_t;\,X_t)
  \;=\;I(Z_t;\,Y_t)\;+\;I(Z_t;\,X_t\mid Y_t).
  \label{eq:ib_decompose}
\end{equation}
The residual $I(Z_t;X_t\mid Y_t)$ upper-bounds corruption entanglement. Since the encoder is deterministic, $H(Z_t\mid X_t,Y_t)=0$, so $I(Z_t;X_t\mid Y_t)=H(Z_t\mid Y_t)$. Then, because $H(Z_t\mid K_t,Y_t)\ge 0$, we have
\begin{equation}
  I(Z_t;\,X_t\mid Y_t)
  \;=\;H(Z_t\mid Y_t)
  \;\ge\;H(Z_t\mid Y_t)-H(Z_t\mid K_t,Y_t)
  \;=\;I(Z_t;\,K_t\mid Y_t)
  \;\ge\;0.
  \label{eq:ib_contains}
\end{equation}
Since $K_t\!\perp\!S_t$ and $Y_t$ is a function of $S_t$, we have $K_t\!\perp\!Y_t$, so $I(Z_t;K_t\mid Y_t)$ measures corruption dependence without task confounding. Minimizing $I(Z_t;X_t)$ under the task constraint therefore minimizes an upper bound on $I(Z_t;K_t\mid Y_t)$, directly counteracting Proposition~\ref{prop:contamination}.

\begin{corollary}[Foreground as an Approximate IB Anchor]
\label{cor:foreground}
Let $F_t$ denote the ideal clean \emph{foreground} observation (agent/task-relevant pixels) and $B_t$ the corresponding background, so that $O_t$ can be viewed as a foreground--background decomposition. Let $Y_t$ be a discrete policy target (e.g., an optimal action label), and define $\eta \triangleq H(Y_t \mid F_t)\ge 0$. Assume $K_t \perp S_t$ and that $F_t$ and $Y_t$ are deterministic functions of $S_t$. Setting $Z_t = F_t$ achieves:
1) \textbf{Exact nuisance invariance:} $I(Z_t;\,K_t\mid Y_t)=0$ (independent of $\eta$);
2) \textbf{Approximate task sufficiency:} $I(Z_t;\,Y_t)=H(Y_t)-\eta$.

Specifically, $Z_t=F_t$ exactly eliminates corruption entanglement and meets the IB task constraint $I(Z_t;Y_t)\ge\beta$ with $\beta=H(Y_t)-\eta$. Empirically, foreground-only and full-background DreamerV3 training yield comparable average returns ($769$ vs.\ $767$, Table~\ref{tab:clean_recovery}), supporting $\eta\approx 0$ on this benchmark suite, though task-level variance exists. \textit{(Proof in Appendix~\ref{app:proofs}.)}
\end{corollary}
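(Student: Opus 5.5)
Both claims reduce to elementary information identities once we use that $Z_t=F_t$ is a deterministic function of $S_t$ and $K_t\perp S_t$. Throughout, $F_t$ and $Y_t$ are discrete: $F_t$ takes values in the finite 8-bit alphabet and $Y_t$ is a discrete label. All entropies are therefore well defined.

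\textbf{Part 1 (nuisance invariance).} We show $F_t\perp K_t\mid Y_t$.
\begin{itemize}
\item Since $(F_t,Y_t)$ is a measurable function of $S_t$, the pair is independent of $K_t$, because functions of independent variables remain independent.
\item Independence of the pair from $K_t$ gives $p(f,y,k)=p(f,y)\,p(k)$.
\item The same argument applied to $Y_t$ alone gives $p(y,k)=p(y)\,p(k)$.
\end{itemize}
Combining these,
\[
p(f,k\mid y)=\frac{p(f,y)\,p(k)}{p(y)}=p(f\mid y)\,p(k)=p(f\mid y)\,p(k\mid y),
\]
whenever $p(y)>0$. Hence $F_t\perp K_t\mid Y_t$, and so $I(Z_t;K_t\mid Y_t)=0$. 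This argument never uses $\eta$.

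An equivalent route is through entropies. Independence gives $H(F_t\mid K_t,Y_t)=H(F_t,Y_t\mid K_t)-H(Y_t\mid K_t)=H(F_t,Y_t)-H(Y_t)=H(F_t\mid Y_t)$.

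\textbf{Part 2 (approximate sufficiency).} This is a one-line identity: $I(Z_t;Y_t)=H(Y_t)-H(Y_t\mid F_t)=H(Y_t)-\eta$ by the definition of $\eta$. Nonnegativity of $\eta$ holds because $Y_t$ is discrete. The IB constraint in~\eqref{eq:ib_standard} is then satisfied with equality at $\beta=H(Y_t)-\eta$.

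\textbf{Linking to the IB discussion.} To tie this to the decomposition~\eqref{eq:ib_contains}, we note that $I(Z_t;K_t\mid Y_t)$ attains its lower bound of zero. The representation $Z_t=F_t$ is defined through the clean state rather than through $X_t$. It is the target the adapter approximates via $\tilde x_t$, and we would state this explicitly so that the deterministic-encoder assumption of~\eqref{eq:ib_decompose} is not misapplied.

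\textbf{Main obstacle.} The only delicate step is justifying that $(F_t,Y_t)$ is jointly independent of $K_t$, and not merely that each component is. This follows from $K_t\perp S_t$ together with both quantities being functions of the same $S_t$. We would state this explicitly, using the preimage sets $\{s:(F(s),Y(s))=(f,y)\}$.

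The empirical remark $769$ vs.\ $767$ is not part of the proof and would be left as commentary.
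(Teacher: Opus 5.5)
Your proposal is correct and follows the paper's proof essentially step for step. Both arguments derive the joint independence $K_t\perp(F_t,Y_t)$ from $K_t\perp S_t$ and conclude $K_t\perp F_t\mid Y_t$ (the paper by showing $p(k\mid f,y)=p(k\mid y)$, you via the equivalent factorization $p(f,k\mid y)=p(f\mid y)\,p(k\mid y)$), and Part~2 is the same one-line identity $I(F_t;Y_t)=H(Y_t)-H(Y_t\mid F_t)$. Your added caveat is also fair: $F_t$ is a function of $S_t$ rather than of $X_t$, so the deterministic-encoder decomposition in~\eqref{eq:ib_decompose} does not literally apply to it, a point the paper leaves implicit.
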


\paragraph{From anchor to ACO-MoE.}
Physical degradations affect foreground pixels, so the ideal $F_t$ is not directly observable from $X_t$. ACO-MoE approximates it via two input-adaptive steps implementing the IB surrogate in~\eqref{eq:ib_decompose}: (i)~latent-expert RGB repair, which learns to map corrupted foreground appearance toward clean rendered targets without observing $K_t$ or any expert--corruption label; and (ii)~foreground compositing onto a black background, which removes variation in $B_t$ that is independent of $Y_t$, suppressing $I(Z_t;X_t\mid Y_t)$ and tightening~\eqref{eq:ib_contains}. Together, these steps push ACO-MoE's output toward the ideal IB anchor $F_t$.

\subsection{Proofs of Theoretical Analysis}
\label{app:proofs}

\subsubsection{Notation}
\label{app:notation}

All logarithms are base-2.
For a discrete random variable $U$ with finite alphabet $\mathcal{U}$,
$H(U)=-\sum_{u}p(u)\log p(u)$ denotes entropy.
Mutual information is $I(U;V)=H(U)-H(U\mid V)$ and conditional mutual
information is $I(U;V\mid W)=H(U\mid W)-H(U\mid V,W)$.
The binary entropy function is
$$h(p)=-p\log p-(1-p)\log(1-p).$$

\paragraph{Random variables (fixed time index).}
We fix the time index and suppress the subscript $t$.
Let $S$ be the task state and let $K$ be the corruption-mode random variable.
Let $K$ be the corruption-mode random variable taking values in $\mathcal{K}=\{1,\ldots,|\mathcal{K}|\}$, where $|\mathcal{K}|$ denotes the number of corruption types (written as $K$ when used as a scalar in the main text).
We assume the corruption process is exogenous: $K\perp S$.
Let $O=\mathcal{R}(S)$ be the clean observation (an 8-bit image).
The corrupted observation is $X=\mathcal{D}_{K}(O)$, where $\mathcal{D}_k$ may be randomized
(e.g., through the VDCS severity/randomness in Eq.~\eqref{eq:obs_raw}, which we absorb into $\mathcal{D}_k$).
The representation and reconstruction are $Z=f_\phi(X)$ and $\hat X=g_\psi(Z)$.

\paragraph{Mode-identifiability.}
Let $\mathcal{X}$ denote the finite 8-bit image alphabet of $X$.
A mode predictor is a measurable function $\varphi:\mathcal{X}\to\mathcal{K}$.
For every $s$ in the support of $S$, the conditional Bayes error
\[
P_e(s)\triangleq\inf_{\varphi}\Pr(\varphi(X)\neq K\mid S=s)
\]
satisfies $P_e(s)<1-\frac{1}{|\mathcal{K}|}$.

\paragraph{Discrete 8-bit observations and distortion.}
$X$ takes values in a finite alphabet (e.g., 8-bit images).
Since the decoder output $g_\psi(Z)$ may be continuous-valued, we define
the \emph{quantized} reconstruction $\hat X \triangleq Q(g_\psi(Z))$,
where $Q(\cdot)$ rounds each pixel/channel to the nearest 8-bit level in
$\{0,\ldots,255\}$.
Thus both $X$ and $\hat X$ are discrete 8-bit images on the same alphabet.
We introduce a distortion measure $d:\mathcal{X}\times\mathcal{X}\to[0,1]$
that satisfies the mismatch property
\[
\mathbf{1}\{X\neq\hat X\}\ \le\ d(X,\hat X).
\]
This property holds, for example, for normalized per-pixel $\ell_1$ distortion $d(X,\hat X)=\frac{1}{HWC}\sum_{i}\frac{|X_i-\hat X_i|}{255}$ on 8-bit images; we keep $d$ abstract in the analysis. For notational simplicity, we write $\hat X=g_\psi(Z)$ throughout, with the understanding that $\hat X$ denotes the quantized reconstruction.

\paragraph{Foreground decomposition and policy target.}
We decompose the clean observation as $O=(F,B)$, where $F$ denotes the foreground (agent and task-relevant objects,
treated as a discrete 8-bit image consistent with the above) and $B$ the background.
The optimal policy target is $Y\triangleq\arg\max_a\pi^\star(a\mid S)$, the greedy action label under the optimal policy; $Y$ is a deterministic function of $S$, takes values in the finite action space $\mathcal{A}$, and its entropy $H(Y)$ is finite.

\subsubsection{Proof of Proposition~\ref{prop:contamination}}
\label{app:proof_prop_contamination}

We fix the time index and suppress the subscript throughout.
The proof shows that the Markov chain structure of reconstruction
\emph{preserves} corruption information, irrespective of the encoder
architecture, so that a small reconstruction distortion forces
$I(Z;K\mid S)$ to be large.

\begin{proof}
\textbf{Step 1: Data-processing inequality along the encoding chain.}
\emph{(Goal: transfer the problem from $Z$ to $\hat X$, which is
easier to connect to the reconstruction distortion.)}

Conditioned on $S=s$, the generative process yields the Markov chain
\[
K\;\to\;X\;\to\;Z\;\to\;\hat X,
\]
because $X$ is generated from $(K,S=s)$, $Z=f_\phi(X)$ is a
deterministic function of $X$, and $\hat X=g_\psi(Z)$ is a
deterministic function of $Z$ (after quantization).
Since $\hat X$ is a function of $Z$, the data-processing inequality
(DPI) gives
\[
I(Z;K\mid S=s)\;\ge\;I(\hat X;K\mid S=s).
\]
Averaging over $S$ yields
\begin{equation}
I(Z;K\mid S)\;\ge\;I(\hat X;K\mid S).
\label{eq:dpi}
\end{equation}
It remains to lower-bound $I(\hat X;K\mid S)$ in terms of $I(X;K\mid S)$
and the reconstruction quality $\epsilon$.

\textbf{Step 2: From reconstruction distortion to a mismatch probability.}
\emph{(Goal: convert the distortion bound into an event
amenable to information-theoretic reasoning.)}

Define $\epsilon_s\triangleq\mathbb{E}[d(X,\hat X)\mid S=s]$.
By the mismatch property of the distortion (see Notation),
$\mathbf{1}\{X\neq\hat X\}\le d(X,\hat X)$, so taking conditional
expectations gives
\[
q_s\;\triangleq\;\Pr(X\neq\hat X\mid S=s)\;\le\;\epsilon_s.
\]
Let $\bar\epsilon_s=\min\{\epsilon_s,\tfrac12\}$.
Since $q_s\le\epsilon_s$ and $h$ is non-decreasing on $[0,\tfrac12]$,
we have $h(q_s)\le h(\bar\epsilon_s)$.

\textbf{Step 3: Bounding the loss in mutual information via coupling.}
\emph{(Goal: quantify how much $I(\hat X;K\mid S)$ can fall below
$I(X;K\mid S)$ by using the reconstruction mismatch event as a handle.)}

Fix $s$ and introduce the indicator $E=\mathbf{1}\{X\neq\hat X\}$,
which satisfies $\Pr(E=1\mid S=s)=q_s$.
We bound $I(X;K\mid S=s)-I(\hat X;K\mid S=s)$ directly:
\begin{align}
I(X;K\mid S=s)-I(\hat X;K\mid S=s)
&= H(K\mid\hat X,S=s)-H(K\mid X,S=s) \notag\\
&\le H(K\mid\hat X,S=s)-H(K\mid X,\hat X,S=s) \notag\\
&= I(K;\,X\mid\hat X,S=s),
\label{eq:key_diff}
\end{align}
where the inequality uses $H(K\mid X,\hat X,S=s)\le H(K\mid X,S=s)$
(conditioning on $\hat X$ cannot increase entropy).
We now bound $I(K;X\mid\hat X,S=s)$ by conditioning on $E$:
\begin{equation}
I(K;\,X\mid\hat X,S=s)
=
I(K;\,E\mid\hat X,S=s)
+
I(K;\,X\mid E,\hat X,S=s).
\label{eq:cond_expand}
\end{equation}
For the first term, $I(K;E\mid\hat X,S=s)\le H(E)\le h(q_s)$.
For the second term: when $E=0$ (i.e., $X=\hat X$), $X$ is
determined by $\hat X$, so $I(K;X\mid E=0,\hat X,S=s)=0$; when $E=1$,
$I(K;X\mid E=1,\hat X,S=s)\le H(K)\le\log|\mathcal{K}|$.
Therefore,
\[
I(K;\,X\mid E,\hat X,S=s)
=
\Pr(E=1\mid S=s)\cdot I(K;\,X\mid E=1,\hat X,S=s)
\le q_s\log |\mathcal{K}|.
\]
Substituting into~\eqref{eq:cond_expand} and using
$h(q_s)\le h(\bar\epsilon_s)$ and $q_s\le\bar\epsilon_s$:
\[
I(X;K\mid S=s)-I(\hat X;K\mid S=s)
\le h(\bar\epsilon_s)+\bar\epsilon_s\log |\mathcal{K}|.
\]

\textbf{Step 4: Average over $S$ and collect $C(\epsilon)$.}
\emph{(Goal: convert the per-state bound into a single clean bound in
terms of the global distortion $\epsilon$.)}

Taking expectation over $S$:
\[
I(\hat X;K\mid S)
\ge I(X;K\mid S)
-\mathbb{E}\!\left[h(\bar\epsilon_S)+\bar\epsilon_S\log |\mathcal{K}|\right].
\]
Since $h$ is concave on $[0,1]$, Jensen's inequality gives
$\mathbb{E}[h(\bar\epsilon_S)]\le h(\mathbb{E}[\bar\epsilon_S])$.
Moreover, $\mathbb{E}[\bar\epsilon_S]\le\mathbb{E}[\epsilon_S]\le\epsilon\le\tfrac12$,
so $h(\mathbb{E}[\bar\epsilon_S])\le h(\epsilon)$ (as $h$ is
non-decreasing on $[0,\tfrac12]$), and $\mathbb{E}[\bar\epsilon_S]\le\epsilon$.
Therefore
\[
I(\hat X;K\mid S)
\ge I(X;K\mid S)-\bigl(\epsilon\log |\mathcal{K}|+h(\epsilon)\bigr).
\]
Combining with~\eqref{eq:dpi} establishes
\[
I(Z;K\mid S)
\ge I(X;K\mid S)-C(\epsilon),
\qquad C(\epsilon)=\epsilon\log |\mathcal{K}|+h(\epsilon).
\]

\textbf{Step 5: Positivity of $I(X;K\mid S)$.}
\emph{(Goal: confirm the lower bound is non-trivially positive,
completing the proof.)}

Since $K\perp S$ and $p(K=k)=1/|\mathcal{K}|$ (balanced modes), we have
$H(K\mid S=s)=H(K)\le\log|\mathcal{K}|$ for every $s$, and in fact $H(K)=\log |\mathcal{K}|$ under the uniform prior.
By the mode-identifiability assumption, $P_e(s)<1-\frac{1}{|\mathcal{K}|}$ for every $s$.
Fano's inequality applied conditionally on $S=s$ then gives
\[
I(X;K\mid S=s)
\ge \log |\mathcal{K}|-h(P_e(s))-P_e(s)\log(|\mathcal{K}|-1)\;>\;0,
\]
where positivity follows because the right side equals $0$ exactly at
$P_e(s)=1-\frac{1}{|\mathcal{K}|}$ (by direct substitution) and is strictly decreasing in $P_e(s)$
for $P_e(s)\in[0,1-\frac{1}{|\mathcal{K}|}]$.
Averaging over $S$ yields $I(X;K\mid S)>0$.
\hfill$\square$ 

\end{proof}

\noindent\textbf{Remark.} The bound $C(\epsilon)=\epsilon\log |\mathcal{K}|+h(\epsilon)$
satisfies $C(\epsilon)\to 0$ as $\epsilon\to 0$, so better reconstruction \emph{tightens}
the lower bound in~\eqref{eq:contamination}. This is the key structural conflict:
\textbf{accurate reconstruction necessarily drives the latent to encode the corruption mode.}

\subsubsection{Proof of Corollary~\ref{cor:foreground}}
\label{app:proof_cor_foreground}

\begin{proof}
Fix the time index and suppress subscript. Let $Y\triangleq\arg\max_a\pi^\star(a\mid S)$ be the optimal action label (a deterministic function of $S$, consistent with the main text), $\eta\triangleq H(Y\mid F)\ge 0$, and assume $K\perp S$. We treat $F$ as a discrete 8-bit image (finite alphabet), consistent with Appendix~\ref{app:notation}. We write $F=h_1(S)$ and $Y=h_2(S)$ for deterministic functions $h_1,h_2$; determinism of $Y$ follows from the $\arg\max$ definition.

\paragraph{Step 1: Exact nuisance invariance.}
Since $(F,Y)=(h_1(S),h_2(S))$ is a deterministic function of $S$ and $K\perp S$,
we have
\[
p(k,f,y)=p(k)\,p(f,y)\quad\forall k,f,y,
\]
i.e., $K\perp(F,Y)$ jointly.
Therefore, we have
\[
p(k\mid y)=\frac{\sum_f p(k,f,y)}{p(y)}
=\frac{p(k)\sum_f p(f,y)}{p(y)}=p(k),
\quad
p(k\mid f,y)=\frac{p(k,f,y)}{p(f,y)}=p(k),
\]
so $p(k\mid f,y)=p(k\mid y)$ for all $k,f,y$, which is exactly $K\perp F\mid Y$.
Hence $I(F;K\mid Y)=0$. Setting $Z=F$ gives $I(Z;K\mid Y)=0$.

\paragraph{Step 2: Approximate task sufficiency.}
By definition of mutual information and $\eta=H(Y\mid F)$:
\[
I(Z;Y)=I(F;Y)=H(Y)-H(Y\mid F)=H(Y)-\eta.
\]
Thus $Z=F$ meets the IB task constraint $I(Z;Y)\ge\beta$ with $\beta=H(Y)-\eta$.
When $\eta\approx 0$, task sufficiency is approximately attained. \hfill$\square$

\paragraph{Remark.}
The two results are asymmetric in their requirements:
nuisance invariance is exact under $K\perp S$ alone, whereas task sufficiency
depends on $\eta$.
We treat the empirical return parity between FG-only and full-background
DreamerV3 (Table~\ref{tab:clean_recovery}) as a proxy for small $\eta$ on average,
while task-level variance indicates $\eta$ can be non-negligible for individual tasks.
\end{proof}
\phantomsection\hypertarget{appnav:network}{}

\section{ACO-MoE Architecture Details}
\label{app:network}

This section documents the complete ACO-MoE architecture, including the dual-stream MoE U-Net and detailed per-expert decoder/router configurations used in our implementation.

\subsection{Dual-Stream MoE U-Net}
\label{app:network_moe}

Our ACO-MoE preprocessor is implemented as a dual-stream Mixture-of-Experts U-Net with a shared encoder, a bottleneck router, and per-expert RGB/mask decoders. The encoder uses four downsampling stages (depth=4) with \emph{DoubleConv} blocks: two $3\times 3$ convolutions with BatchNorm and SiLU. The router applies global average pooling to the bottleneck feature, followed by a two-layer MLP that outputs routing logits over $N_e$ latent repair experts. 
We use $N_e$ only as a capacity hyperparameter rather than as the number of corruption modes. Expert indices are latent: no degradation or perturbation labels are provided to the router, no minibatch contains an expert target, and no loss term supervises expert--corruption assignment. Each expert contains two decoders: (i) an RGB decoder that predicts a 3-channel residual added to the degraded input (followed by $\tanh$ and clamping to $[-1,1]$), and (ii) a mask decoder that outputs 2-channel logits for foreground/background. The foreground probability is obtained by softmax and used to compose an agent-centric observation with a black background ($-1$).

When \texttt{task\_conditioned\_mask} is enabled, mask decoders share the upsampling trunk and only the final $1\times 1$ head is task-specific; otherwise each expert predicts its own 2-channel mask logits. An optional \texttt{special\_rgb\_expert} can be loaded from a standalone U-Net checkpoint for a specific degradation (e.g., motion blur) and kept frozen during MoE training.

\begin{table}[t]
\centering
\caption{ACO-MoE U-Net architecture (default base channel $C{=}9$). $H{\times}W$ is
the input resolution; depth$=4$ requires $H,W$ divisible by 16. DoubleConv denotes
$3{\times}3$ Conv$\rightarrow$BN$\rightarrow$SiLU$\rightarrow$ $3{\times}3$ Conv$\rightarrow$BN$\rightarrow$SiLU.}
\label{tab:aco_moe_unet}
\scriptsize
\setlength{\tabcolsep}{4pt}
\renewcommand{\arraystretch}{1.08}
\begin{tabularx}{\linewidth}{@{}l l X l@{}}
\toprule
\textbf{Block} & \textbf{Stage} & \textbf{Configuration} & \textbf{Output size} \\
\midrule
\multirow{6}{*}{Shared encoder}
& Input
& RGB in $[-1,1]$
& $H\times W\times 3$ \\
& Enc-1
& DoubleConv $3\to C$
& $H\times W\times C$ \\
& Enc-2
& MaxPool $2{\times}2$ + DoubleConv $C\to 2C$
& $\frac{H}{2}\times \frac{W}{2}\times 2C$ \\
& Enc-3
& MaxPool $2{\times}2$ + DoubleConv $2C\to 4C$
& $\frac{H}{4}\times \frac{W}{4}\times 4C$ \\
& Enc-4
& MaxPool $2{\times}2$ + DoubleConv $4C\to 8C$
& $\frac{H}{8}\times \frac{W}{8}\times 8C$ \\
& Bottleneck
& MaxPool $2{\times}2$ + DoubleConv $8C\to 16C$
& $\frac{H}{16}\times \frac{W}{16}\times 16C$ \\
\midrule
\multirow{3}{*}{Router}
& GAP
& AdaptiveAvgPool to $1{\times}1$
& $1\times 1\times 16C$ \\
& FC-1
& MLP $16C\to 256$ with SiLU
& $256$ \\
& FC-2
& MLP $256\to N_e$ (routing logits over latent experts)
& $N_e$ \\
\midrule
\multirow{4}{*}{Decoder trunk (per expert)}
& Up-1
& Upsample $\times 2$ + concat (skip Enc-4) + DoubleConv $(16C{+}8C)\to 8C$
& $\frac{H}{8}\times \frac{W}{8}\times 8C$ \\
& Up-2
& Upsample $\times 2$ + concat (skip Enc-3) + DoubleConv $(8C{+}4C)\to 4C$
& $\frac{H}{4}\times \frac{W}{4}\times 4C$ \\
& Up-3
& Upsample $\times 2$ + concat (skip Enc-2) + DoubleConv $(4C{+}2C)\to 2C$
& $\frac{H}{2}\times \frac{W}{2}\times 2C$ \\
& Up-4
& Upsample $\times 2$ + concat (skip Enc-1) + DoubleConv $(2C{+}C)\to C$
& $H\times W\times C$ \\
\midrule
\multirow{2}{*}{RGB head (per expert)}
& Out
& $1{\times}1$ Conv $C\to 3$ (residual)
& $H\times W\times 3$ \\
& Residual add
& $\hat{o}_t=\mathrm{clip}\big(x_t + \tanh(\Delta_t),\,[-1,1]\big)$ 
& $H\times W\times 3$ \\
\midrule
\multirow{2}{*}{Mask head (per expert)}
& Out
& $1{\times}1$ Conv $C\to 2$ (logits) + softmax
& $H\times W\times 2$ \\
& Foreground prob.
& $m=\mathrm{softmax}(\cdot)[\text{fg}]$
& $H\times W\times 1$ \\
\bottomrule
\end{tabularx}
\end{table}

\paragraph{Expert aggregation.}
Routing logits are converted to probabilities with softmax. In the default training path we mix expert outputs using soft routing weights. At inference we optionally use hard top-1 routing for efficiency. The final agent-centric observation is computed as $\hat{o}_t\odot m_t + (-1)(1-m_t)$, where $m_t$ is the foreground probability.

\subsection{Per-Expert Decoder and Router Specifications}
\label{app:network_specs}

We provide detailed channel configurations for the per-expert decoders 
and the routing network, complementing the summary in 
Table~\ref{tab:aco_moe_unet}.

Each expert is a U-Net with a shared encoder and two decoder branches. The encoder has four downsampling blocks with channels $[64,128,256,512]$; each block applies Conv$\to$GroupNorm$\to$SiLU$\to$Conv$\to$GroupNorm$\to$SiLU followed by MaxPool, taking an $84\times 84 \times 3$ input normalized to $[-1,1]$. 
The RGB decoder uses four upsampling blocks with skip connections and predicts a residual $\Delta_t$, yielding restored output $\hat{o}_t = \mathrm{clip}(x_t + \tanh(\Delta_t),\,[-1,1])$, producing an $84\times 84 \times 3$ output.
The mask decoder mirrors the upsampling structure and predicts 2-channel logits, which are converted to a foreground probability via softmax; this yields an $84\times 84 \times 2$ logits tensor and a corresponding $84\times 84 \times 1$ foreground map. The router consists of four strided conv layers with channels $[32,64,128,256]$ and stride 2, global average pooling, and a two-layer MLP $256\to256\to N_e$, followed by softmax routing probabilities over latent experts. In our default implementation, we use $N_e=9$ as a fixed capacity choice; this value is not used as a corruption-mode label space.

\phantomsection\hypertarget{appnav:data}{}
\section{Offline Dataset Generation}
\label{app:data}

\subsection{Setup for Dataset Generation}
\label{app:setup4data}

ACO-MoE is pretrained entirely offline on a paired dataset before any RL training begins. This section describes the dataset construction pipeline, which couples VDCS physical degradations with DMC-GB background distractions. The training tuple contains only (corrupted image, clean image, agent-only image, foreground mask), which supervise the three loss terms in Eq.~\ref{eq:total_loss}. The corruption-mode index $k_t$ is stored only as HDF5 metadata for auditing and reproducibility; it is never returned in training minibatches, never fed to the router, and never used as an expert or degradation label.

We construct a unified dataset over DMControl tasks by balancing samples across tasks and perturbation-generation settings. For each pair, we render clean frames $o_t$ and collect a parallel set of observations under a uniform background setting. Foreground masks $m_t$ are then obtained by color-threshold segmentation: pixels deviating from the background appearance beyond a fixed tolerance are labeled foreground, covering the agent and task-relevant objects including goal sites. The tolerance is set per-task to ensure robust separation across varied scene appearances.

For VDCS physical degradations, we apply stochastic intensity jitter
within a $\pm10\%$ band around a base severity to encourage robustness to
severity variation. For DMC-GB background distractions, we render dynamic
backgrounds (e.g., natural video clips) or palette-based color shifts to induce
appearance changes that are independent of the agent dynamics.
We use a $0.9/0.1$ train/validation split. In our main experiments,
images are resized to $84{\times}84$, and we generate 5{,}000 samples per task,
with the task list and degradation set matching the settings described in the main text. Algorithm~\ref{alg:dataset_generation} summarizes the complete data generation pipeline.

\begin{algorithm}[t]
\caption{Unified DMC Dataset Generation}
\label{alg:dataset_generation}
\begin{algorithmic}[1]
\renewcommand{\algorithmicrequire}{\textbf{Input:}}
\Require tasks $\mathcal{T}$, perturbation generators $\mathcal{K}$, samples $N$, size $(H,W)$, train ratio $\rho$, background path $\mathcal{B}$, seed $s$
\For{each task $\tau \in \mathcal{T}$}
    \State infer action\_repeat from benchmark map or policy logs
    \If{policy logdir is provided} \State load Dreamer policy for $\tau$ \EndIf
    \For{each degradation $k \in \mathcal{K}$}
        \State initialize clean env and/or DMC-GB env for $(\tau, k)$
        \For{$i=1$ to $N$}
            \State render clean image $o_t$ and uniform-background image $o^{\text{bg}}_t$
            \State build mask $m_t$ via color-threshold segmentation on $o^{\text{bg}}_t$
            \State $o^{\text{agent}}_t \leftarrow o_t \odot m_t$
            \If{$k$ is VDCS} \State apply degradation $\mathcal{D}_k$ with jittered intensity \EndIf
            \If{$k$ is DMC-GB} \State render corrupted frame from DMC-GB env and sync physics \EndIf
            \State write H5 training tuple (degraded, clean, agent\_only, mask); store $k$ only as audit metadata and never return it to the training loader
            \State assign to train/val split with probability $\rho$
            \State step env with random action or Dreamer policy
        \EndFor
    \EndFor
\EndFor
\end{algorithmic}
\end{algorithm}

\subsection{Toward Real-World Data Generation without Manual Foreground Labels}
\label{app:real_world_data}

Figure~\ref{fig:realworld_data} illustrates a practical path for extending ACO-MoE beyond simulation without manual foreground annotation. In our current setting, simulation provides clean frames and foreground masks automatically, which are used only for offline pretraining. At deployment time, ACO-MoE takes only corrupted RGB as input and does not require clean references, foreground masks, or corruption labels.

For real-world systems, the same supervision can be approximated from two automatic sources. First, pseudo foreground masks can be generated from robot geometry, motion cues, or offline segmentation/tracking models. Second, corrupted--clean pseudo pairs can be constructed from nominal-condition real frames or repeated trajectories, optionally combined with synthetic degradations or temporal alignment. As shown in Fig.~\ref{fig:realworld_data}, these two sources jointly provide supervision for offline ACO-MoE pretraining, while deployment still uses corrupted RGB only. This suggests that the method is not inherently tied to simulator-only labels, although validating such a real-world data-generation pipeline on physical robotic platforms remains future work.

\begin{figure}[t]
    \centering
    \includegraphics[width=0.95\linewidth]{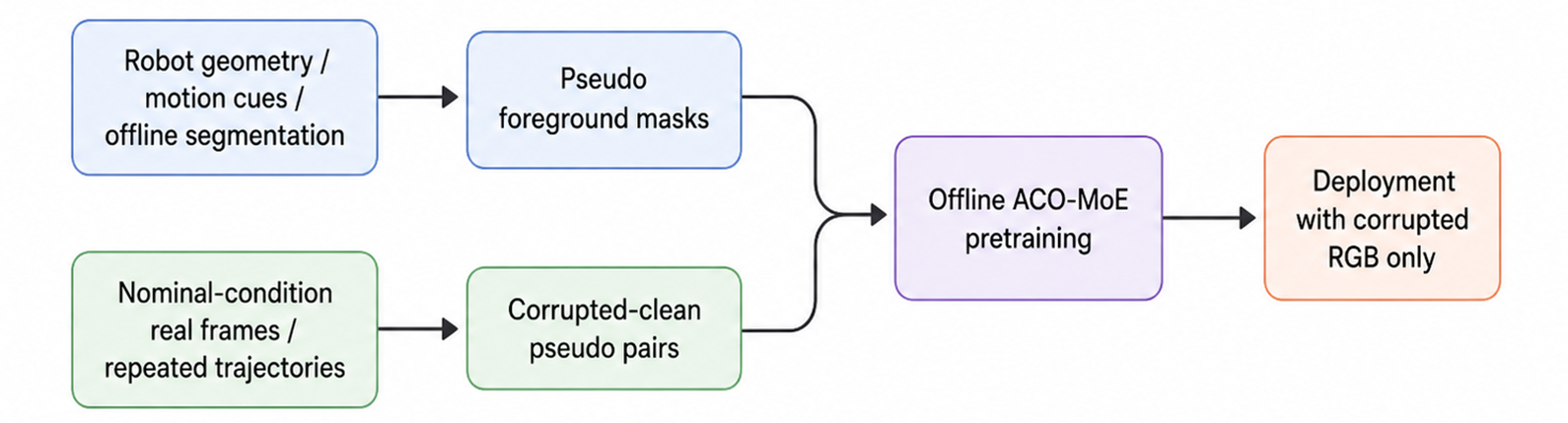}
    \caption{Conceptual pathway for extending ACO-MoE beyond simulation without manual foreground labels. Pseudo foreground masks can be obtained automatically from robot geometry, motion cues, or offline segmentation, while corrupted--clean pseudo pairs can be constructed from nominal-condition real frames or repeated trajectories. Both are used only for offline pretraining; deployment still requires corrupted RGB only.}
    \label{fig:realworld_data}
\end{figure}

\phantomsection\hypertarget{appnav:training}{}
\section{Training Settings and Computational Cost}
\label{app:training}

This section summarizes the MoE pretraining procedure, the frozen inference
settings, and the hyperparameter choices and computational costs used in our
experiments.

\subsection{MoE Pretraining}
\label{app:training_moe}

We pretrain the dual-stream MoE U-Net on the unified offline HDF5 dataset. Each training sample provides only a degraded input $x_t$, the clean rendered target $o_t$, the clean-background target $o^{\text{cb}}_t$, and a pseudo foreground mask $m_t^{\text{ps}}$; no degradation identity $k_t$ or perturbation label is provided. Inputs are normalized to $[-1,1]$ and augmented with synchronized random horizontal flips to preserve mask--image alignment. The router applies global average pooling to the encoder bottleneck and produces routing weights $\pi_t$ via a two-layer MLP. These weights soft-mix the per-expert RGB residuals and mask logits at every step, and the router and all expert decoders are updated only through the three observation-adaptation losses ($\mathcal{L}_{\text{rgb}}$, $\mathcal{L}_{\text{mask}}$, $\mathcal{L}_{\text{final}}$ of Eq.~\ref{eq:total_loss}), without any routing-label or expert-assignment supervision.
For video-background distraction modes, we down-weight $\mathcal{L}_{\text{rgb}}$ to avoid incentivizing unnecessary background inpainting. We use AdamW with the hyperparameters reported in Section~\ref{app:training_hyperparams}. Algorithm~\ref{alg:moe_training} outlines the training loop.

\begin{algorithm}[t]
\caption{Dual-Stream MoE Pretraining}
\label{alg:moe_training}
\begin{algorithmic}[1]
\renewcommand{\algorithmicrequire}{\textbf{Input:}}
\Require paired dataset $\mathcal{D}=\{(x_t,o_t,o^{\text{cb}}_t,m_t^{\text{ps}})\}$ without degradation labels, number of experts $N_e$, loss weights $\lambda_{\text{rgb}},\lambda_{\text{mask}},\lambda_{\text{final}}$
\State Initialize a dual-stream MoE U-Net with a shared encoder, a router, and per-expert RGB/mask decoders
\State Initialize AdamW optimizer
\For{each training step}
    \State Sample a minibatch $(x_t,o_t,o^{\text{cb}}_t,m_t^{\text{ps}})\sim\mathcal{D}$
    \State Encode $x_t$ to bottleneck feature $u_t$ and compute router weights $\pi_t=\mathrm{softmax}(\mathrm{MLP}(\mathrm{GAP}(u_t)))$
    \State Aggregate per-expert outputs:
    $\Delta_t=\sum_{j=1}^{N_e}\pi_{t,j}\Delta^{(j)}_t$,\quad
    $\boldsymbol{\ell}^{\text{mask}}_t=\sum_{j=1}^{N_e}\pi_{t,j}\boldsymbol{\ell}^{\text{mask},(j)}_t$
    \State Compute restored RGB $\hat{o}_t=\mathrm{clip}_{[-1,1]}(x_t+\tanh(\Delta_t))$, foreground probability $m_t=\mathrm{softmax}(\boldsymbol{\ell}^{\text{mask}}_t)_{[\text{fg}]}$, and agent-centric output $\hat{o}^{\text{agent}}_t=\hat{o}_t\odot m_t + b\odot(1-m_t)$
    \State $\mathcal{L}_{\text{rgb}} \leftarrow \|\hat{o}_t - o_t\|_1$
    \State $\mathcal{L}_{\text{mask}} \leftarrow \mathrm{CE}(\boldsymbol{\ell}^{\text{mask}}_t,\, m_t^{\text{ps}})$
    \State $\mathcal{L}_{\text{final}} \leftarrow \|\hat{o}^{\text{agent}}_t - o^{\text{cb}}_t\|_1$
    \State $\mathcal{L} \leftarrow \lambda_{\text{rgb}}\mathcal{L}_{\text{rgb}}
    + \lambda_{\text{mask}}\mathcal{L}_{\text{mask}}
    + \lambda_{\text{final}}\mathcal{L}_{\text{final}}$
    \State Optionally rescale $\mathcal{L}_{\text{rgb}}$ for video-distraction modes
    \State Update all parameters by backpropagation; periodically validate and save checkpoints
\EndFor
\end{algorithmic}
\end{algorithm}

\subsection{Frozen Inference Settings}
\label{app:training_inference}

At evaluation time, we freeze the pretrained MoE and use hard top-1 routing for
efficiency. For each environment step, the corrupted observation is first
processed by the frozen MoE to produce an agent-centric observation, which is
then fed into the downstream RL backbone (model-free or model-based). We report
mean$\pm$std returns across evaluation seeds/episodes under both VDCS
(Markov-temporal) and background-distraction settings, and optionally record
visualization panels showing the degraded input, restored output, and
agent-centric observation. Algorithm~\ref{alg:moe_inference} details the frozen
inference procedure.

\begin{algorithm}[t]
\caption{Evaluation with Frozen ACO-MoE}
\label{alg:moe_inference}
\begin{algorithmic}[1]
\renewcommand{\algorithmicrequire}{\textbf{Input:}}
\Require env $\mathcal{E}$ with VDCS/DMC-GB, pretrained MoE $G$, RL agent $\pi$
\For{each episode}
    \State reset env, initialize agent state
    \For{each step $t$}
        \State observe corrupted frame $x_t$
        \State compute $(\hat{o}_t, m_t) = G(x_t)$ with top-1 routing
        \State compose agent-centric frame $o^{\text{agent}}_t = \hat{o}_t \odot m_t + (-1)(1-m_t)$
        \State action $a_t \leftarrow \pi(o^{\text{agent}}_t)$
        \State step env and accumulate return
    \EndFor
\EndFor
\end{algorithmic}
\end{algorithm}

\subsection{Hyperparameter Configuration}
\label{app:training_hyperparams}

We report all hyperparameters for MoE pretraining, router training, 
and downstream RL training to support reproducibility.

MoE pretraining uses AdamW with learning rate $10^{-4}$, weight decay $10^{-4}$, and batch size 1024, for 15,000 iterations over the unified paired dataset. The total loss combines RGB restoration ($\ell_1$), mask segmentation (2-class cross-entropy against the pseudo masks $m_t^{\text{ps}}$), and agent-centric reconstruction ($\ell_1$ against $o^{\text{cb}}_t$), with weights $\lambda_{\text{rgb}}{:}\lambda_{\text{mask}}{:}\lambda_{\text{final}}=1{:}1{:}1$. The shared encoder, router MLP, and all $N_e$ per-expert decoders are optimized jointly under this single objective, with no auxiliary degradation-classification loss or expert-label supervision. For DreamerV3, the RSSM uses 1024 deterministic units with $32\times 32$ categorical stochastic units, imagination horizon $T=15$, discount $\gamma=0.997$, batch size 16 sequences of 64 timesteps, and 1M total environment steps. For frozen-policy RL evaluation, DMControl (DreamerV3) uses \texttt{steps=1e6}, \texttt{action\_repeat=2}, input size $84\times 84$, and episode length $1000$. RoboSuite (TD-MPC2) uses \texttt{image\_size=168}, \texttt{frame\_stack=3}, \texttt{action\_repeat=2}, and episode length $200$.
Both RL backbones require approximately 24 hours for each task/seed on a single NVIDIA GeForce RTX 5090.

\subsection{Computational Overhead}
\label{app:training_cost}

All latency measurements are conducted on a single NVIDIA GeForce RTX 5090 with batch size 1 and $84{\times}84$ RGB inputs. We report a decomposition into \emph{RL policy} (one policy forward per environment step), \emph{Preprocess} (image preprocessing), and their \emph{Total} runtime. Results are averaged over multiple trials after warm-up.

\begin{table}
\centering
\caption{\textbf{Latency decomposition (ms/step).}
RTX 5090, $84{\times}84$, batch size 1 (mean$\pm$std).
Preprocess modules: ACO-MoE restorer (4.90M params) vs.\ SAM2-tiny (38.95M params).
For FTR, preprocessing is amortized with segmentation invoked once every
$T_{\text{sel}}{=}20$ steps and tracking otherwise:
$\frac{1}{20}t_{\text{segment}}+\frac{19}{20}t_{\text{track}}$.}
\label{tab:latency_decomp}
\scriptsize
\setlength{\tabcolsep}{6pt}
\renewcommand{\arraystretch}{1.05}
\begin{tabular}{lccc}
\toprule
\textbf{System} & \textbf{RL policy} & \textbf{Preprocess} & \textbf{Total} \\
\midrule
ACO-MoE (ours) & 2.36$\pm$0.003 & 20.54$\pm$0.93 & \textbf{22.90$\pm$0.93} \\
FTR~\cite{wang2025ftr} & 2.76$\pm$0.34 & 73.48$\pm$1.21 & \textbf{76.24$\pm$1.25} \\
\bottomrule
\end{tabular}
\vspace{-3mm}
\end{table}

Table~\ref{tab:latency_decomp} shows that the main runtime gap comes from preprocessing. For ACO-MoE, the restorer contributes $20.54\pm0.93$ ms/step, and total latency is $22.90\pm0.93$ ms/step with one policy forward. For FTR, the low-level RL policy is DrQ-v2-based and relatively lightweight ($2.76\pm0.34$ ms/step), while SAM2-tiny preprocessing dominates: $t_{\text{segment}}=85.77\pm3.64$ ms, $t_{\text{track}}=72.84\pm1.17$ ms, yielding an amortized $73.48\pm1.21$ ms/step at $T_{\text{sel}}=20$. Hence, FTR's end-to-end latency is $76.24\pm1.25$ ms/step, substantially higher than ACO-MoE.

\subsection{Existing Assets and Licenses}
\label{app:assets_licenses}

We use existing public assets only for non-commercial academic research and evaluation. The simulation benchmarks, RL backbones, and image-restoration baselines are credited through citations to their original papers and repositories. DMControl is released under Apache-2.0. DreamerV3, TD-MPC2, Restormer, AdaIR, and NAFNet are released under open-source licenses stated in their repositories. PromptIR is used under its academic/non-commercial research terms. We do not redistribute third-party datasets, models, or code beyond their stated terms of use.

\phantomsection\hypertarget{appnav:results}{}
\section{Additional Experimental Results and Ablations}
\label{app:results}
This section reports complete per-task results omitted from the main paper due to space constraints. Section~\ref{app:clean_recovery} analyzes ACO-MoE recovery vs.\ DreamerV3 clean baselines. Section~\ref{app:results_dmcgb} covers DMC-GB generalization. Section~\ref{app:results_curves} provides training curves. Section~\ref{app:backbone_generalization} validates backbone-agnostic transferability via DrQ-v2. Section~\ref{app:results_ablations} presents full ablation results.

\subsection{Clean Baseline Recovery Analysis}
\label{app:clean_recovery}
To contextualize ACO-MoE's performance under VDCS Markov-temporal 
perturbations, Table~\ref{tab:clean_recovery} reports DreamerV3 trained 
on full-background observations, DreamerV3 trained on foreground-only 
observations, ACO-MoE under VDCS corruption, and the resulting recovery 
rate per task. To empirically validate the assumption $H(Y_t\mid F_t)\approx 0$, 
we train DreamerV3 on full-background clean observations and obtain an average 
score of 767, compared to 769 for foreground-only training---a difference of 
0.3\%---confirming that background cues carry negligible policy-relevant 
information in these benchmarks (Figure~\ref{fig:bg_clean_vdcs}(a) vs.\ (b)). 
ACO-MoE recovers 95.3\% of clean foreground-only performance on average, 
with simple locomotion tasks (\texttt{cartpole\_swingup}, \texttt{walker\_walk}) 
reaching $>$98\% and the hardest case (\texttt{finger\_spin}) at 90.6\%.
\begin{figure}[hbpt]
\centering
\includegraphics[width=0.9\linewidth]{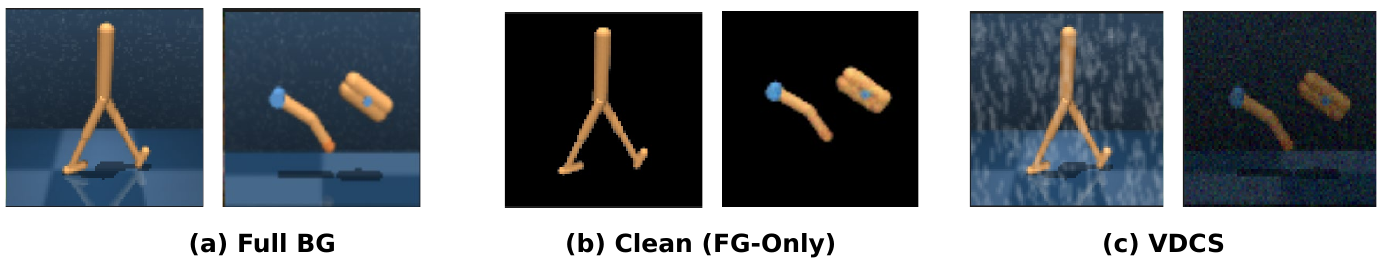}
\caption{Three observation settings: (a)~\textbf{Full BG}: standard DMC rendering; (b)~\textbf{Clean (FG-only)}: black-masked background; (c)~\textbf{VDCS}: FG-only with Markov-temporal perturbations.}
\label{fig:bg_clean_vdcs}
\end{figure}

\begin{table}[hbpt]
\centering
\caption{Clean baselines vs.\ ACO-MoE (VDCS). Recovery = ACO-MoE / FG-only $\times100$. Full BG vs.\ FG-only differ by 0.3\%, confirming $H(Y_t\mid F_t)\approx 0$.}
\label{tab:clean_recovery}
\scriptsize
\setlength{\tabcolsep}{4pt}
\renewcommand{\arraystretch}{0.95}
\begin{tabular}{l|cccc}
\toprule
\multicolumn{1}{c|}{\textbf{Task}} &
\multicolumn{1}{c}{DreamerV3} &
\multicolumn{1}{c}{DreamerV3} &
\multicolumn{1}{c}{\textbf{ACO-MoE}} &
\multicolumn{1}{c}{Recovery} \\
\multicolumn{1}{c|}{} &
\multicolumn{1}{c}{\textbf{(Full BG)}} &
\multicolumn{1}{c}{\textbf{(FG-only)}} &
\multicolumn{1}{c}{\textbf{(VDCS)}} &
\multicolumn{1}{c}{(\%)} \\
\midrule
\texttt{cartpole,~swingup}  & 852 & 878 & 864 & 98.5 \\
\texttt{finger,~spin}       & 558 & 543 & 492 & 90.6 \\
\texttt{finger,~turn\_hard} & 793 & 929 & 882 & 95.0 \\
\texttt{hopper,~stand}      & 952 & 948 & 910 & 96.0 \\
\texttt{hopper,~hop}        & 388 & 359 & 339 & 94.5 \\
\texttt{cheetah,~run}       & 848 & 749 & 700 & 93.5 \\
\texttt{walker,~walk}       & 963 & 974 & 963 & 98.8 \\
\texttt{walker,~run}        & 782 & 772 & 738 & 95.5 \\
\midrule
\rowcolor{gray!10}
\textbf{Average} & \textbf{767} & \textbf{769} & \textbf{736} & \textbf{95.3} \\
\bottomrule
\end{tabular}
\vspace{-4mm}
\end{table}
\subsection{DMC-GB Generalization: Per-Task Results}
\label{app:results_dmcgb}

Tables~\ref{tab:video_hard_results} and~\ref{tab:color_hard_results} 
report full per-task performance on DMC-GB \texttt{video\_hard} and 
\texttt{color\_hard} settings. ACO-MoE achieves state-of-the-art 
averages across both settings, with particularly strong gains on 
\texttt{cheetah\_run} (+71\% video-hard, +167\% color-hard) and 
\texttt{walker\_walk} (+4\% and +30\%), demonstrating that foreground 
extraction generalizes robustly across background-distraction regimes. 
The \texttt{finger\_spin} underperformance is consistent with the 
backbone limitation identified in Section~\ref{sec:exp_vdcs}.

\begin{table}[hpbt]
\centering
\caption{Performance comparison on Video Hard perturbations. Results show mean $\pm$ std over evaluation episodes. \textcolor{red}{Red}: best performance; \textcolor{blue}{Blue}: second best.}
\label{tab:video_hard_results}
\scriptsize
\setlength{\tabcolsep}{0.4pt}
\renewcommand{\arraystretch}{0.9}
\begin{tabular}{l|ccccccccccc|c}
\toprule
\multicolumn{1}{c|}{\textbf{Task}} & \multicolumn{1}{c}{PAD} & \multicolumn{1}{c}{Q$^2$} & \multicolumn{1}{c}{SimGRL} & \multicolumn{1}{c}{FTR} & \multicolumn{1}{c}{DrQ} & \multicolumn{1}{c}{SODA} & \multicolumn{1}{c}{SVEA} & \multicolumn{1}{c}{SRM} & \multicolumn{1}{c}{SGQN} & \multicolumn{1}{c}{SMG} & \multicolumn{1}{c|}{\textbf{ACO-MoE}} & \multicolumn{1}{c}{$\bigtriangleup$} \\
\multicolumn{1}{c|}{} & & & & & & & & & & & \multicolumn{1}{c|}{\textbf{(Ours)}} & \\
\midrule
cartpole, & 79 & \textcolor{blue}{807} & 579 & 646 & 168 & 346 & 510 & 254 & 599 & 764 & \textcolor{red}{860} & +53 \\
swingup & $\pm${\scriptsize 9} & \textcolor{blue}{$\pm${\scriptsize 84}} & $\pm${\scriptsize 306} & $\pm${\scriptsize 169} & $\pm${\scriptsize 35} & $\pm${\scriptsize 59} & $\pm${\scriptsize 177} & $\pm${\scriptsize 69} & $\pm${\scriptsize 112} & $\pm${\scriptsize 32} & \textcolor{red}{$\pm${\scriptsize 7}} & {\scriptsize 7\%} \\
\midrule
finger, & 1 & 696 & 352 & \textcolor{blue}{904} & 54 & 310 & 353 & 131 & 710 & \textcolor{red}{910} & 497 & $-$413 \\
spin & $\pm${\scriptsize 1} & $\pm${\scriptsize 122} & $\pm${\scriptsize 274} & \textcolor{blue}{$\pm${\scriptsize 71}} & $\pm${\scriptsize 44} & $\pm${\scriptsize 72} & $\pm${\scriptsize 71} & $\pm${\scriptsize 89} & $\pm${\scriptsize 159} & \textcolor{red}{$\pm${\scriptsize 61}} & $\pm${\scriptsize 19} & {\scriptsize $-$45\%} \\
\midrule
walker, & 802 & 476 & \textcolor{blue}{932} & 912 & 278 & 406 & 814 & 558 & 870 & 955 & \textcolor{red}{990} & +35 \\
stand & $\pm${\scriptsize 41} & $\pm${\scriptsize 45} & \textcolor{blue}{$\pm${\scriptsize 17}} & $\pm${\scriptsize 51} & $\pm${\scriptsize 79} & $\pm${\scriptsize 68} & $\pm${\scriptsize 57} & $\pm${\scriptsize 139} & $\pm${\scriptsize 78} & $\pm${\scriptsize 9} & \textcolor{red}{$\pm${\scriptsize 11}} & {\scriptsize 4\%} \\
\midrule
walker, & 26 & 325 & 734 & \textcolor{blue}{900} & 110 & 175 & 348 & 165 & 634 & 814 & \textcolor{red}{936} & +36 \\
walk & $\pm${\scriptsize 11} & $\pm${\scriptsize 62} & $\pm${\scriptsize 42} & \textcolor{blue}{$\pm${\scriptsize 45}} & $\pm${\scriptsize 33} & $\pm${\scriptsize 31} & $\pm${\scriptsize 80} & $\pm${\scriptsize 99} & $\pm${\scriptsize 136} & $\pm${\scriptsize 51} & \textcolor{red}{$\pm${\scriptsize 52}} & {\scriptsize 4\%} \\
\midrule
cheetah, & 7 & 222 & 221 & \textcolor{blue}{446} & 38 & 118 & 105 & 87 & 135 & 303 & \textcolor{red}{763} & +317 \\
run & $\pm${\scriptsize 7} & $\pm${\scriptsize 50} & $\pm${\scriptsize 111} & \textcolor{blue}{$\pm${\scriptsize 65}} & $\pm${\scriptsize 26} & $\pm${\scriptsize 40} & $\pm${\scriptsize 13} & $\pm${\scriptsize 24} & $\pm${\scriptsize 44} & $\pm${\scriptsize 46} & \textcolor{red}{$\pm${\scriptsize 43}} & {\scriptsize 71\%} \\
\midrule
\rowcolor{gray!12}
\multicolumn{1}{c|}{\textbf{Average}} & 183.0 & 505.2 & 563.6 & \textcolor{blue}{761.6} & 129.6 & 271.0 & 426.0 & 239.0 & 589.6 & 749.2 & \textcolor{red}{809.2} & +47.6 \\
\rowcolor{gray!12}
\multicolumn{1}{c|}{} & $\pm${\scriptsize 347.4} & $\pm${\scriptsize 245.5} & $\pm${\scriptsize 286.0} & \textcolor{blue}{$\pm${\scriptsize 209.2}} & $\pm${\scriptsize 97.5} & $\pm${\scriptsize 120.4} & $\pm${\scriptsize 260.8} & $\pm${\scriptsize 188.6} & $\pm${\scriptsize 274.7} & $\pm${\scriptsize 260.6} & \textcolor{red}{$\pm${\scriptsize 194.3}} & {\scriptsize 6\%} \\
\bottomrule
\end{tabular}
\end{table}

\begin{table}[hpbt]
\centering
\caption{Performance comparison on Color Hard perturbations. Results show mean $\pm$ std over evaluation episodes. \textcolor{red}{Red}: best performance; \textcolor{blue}{Blue}: second best.}
\label{tab:color_hard_results}
\scriptsize
\setlength{\tabcolsep}{4pt}
\renewcommand{\arraystretch}{0.9}
\begin{tabular}{l|cccccccc|c}
\toprule
\multicolumn{1}{c|}{\textbf{Task}} & \multicolumn{1}{c}{SAC} & \multicolumn{1}{c}{DrQ} & \multicolumn{1}{c}{SODA} & \multicolumn{1}{c}{SVEA} & \multicolumn{1}{c}{SRM} & \multicolumn{1}{c}{SGQN} & \multicolumn{1}{c}{SMG} & \multicolumn{1}{c|}{\textbf{ACO-MoE}} & \multicolumn{1}{c}{$\bigtriangleup$} \\
\multicolumn{1}{c|}{} & & & & & & & & \multicolumn{1}{c|}{\textbf{(Ours)}} & \\
\midrule
cartpole, & 184 & 717 & 585 & \textcolor{blue}{752} & 752 & 636 & 726 & \textcolor{red}{845} & +93 \\
swingup & $\pm${\scriptsize 26} & $\pm${\scriptsize 133} & $\pm${\scriptsize 66} & \textcolor{blue}{$\pm${\scriptsize 86}} & $\pm${\scriptsize 103} & $\pm${\scriptsize 110} & $\pm${\scriptsize 62} & \textcolor{red}{$\pm${\scriptsize 33}} & {\scriptsize 12\%} \\
\midrule
finger, & 271 & 655 & 663 & \textcolor{red}{868} & 834 & 700 & \textcolor{blue}{841} & 497 & $-$371 \\
spin & $\pm${\scriptsize 23} & $\pm${\scriptsize 214} & $\pm${\scriptsize 106} & \textcolor{red}{$\pm${\scriptsize 74}} & $\pm${\scriptsize 90} & $\pm${\scriptsize 219} & \textcolor{blue}{$\pm${\scriptsize 113}} & $\pm${\scriptsize 11} & {\scriptsize $-$43\%} \\
\midrule
walker, & 526 & 769 & 719 & 799 & 807 & 788 & \textcolor{blue}{878} & \textcolor{red}{990} & +112 \\
stand & $\pm${\scriptsize 259} & $\pm${\scriptsize 182} & $\pm${\scriptsize 138} & $\pm${\scriptsize 118} & $\pm${\scriptsize 128} & $\pm${\scriptsize 114} & \textcolor{blue}{$\pm${\scriptsize 70}} & \textcolor{red}{$\pm${\scriptsize 11}} & {\scriptsize 13\%} \\
\midrule
walker, & 379 & 456 & 396 & 571 & 483 & 632 & \textcolor{blue}{739} & \textcolor{red}{958} & +219 \\
walk & $\pm${\scriptsize 37} & $\pm${\scriptsize 192} & $\pm${\scriptsize 78} & $\pm${\scriptsize 134} & $\pm${\scriptsize 123} & $\pm${\scriptsize 176} & \textcolor{blue}{$\pm${\scriptsize 31}} & \textcolor{red}{$\pm${\scriptsize 37}} & {\scriptsize 30\%} \\
\midrule
cheetah, & 208 & 147 & 199 & 238 & 203 & 210 & \textcolor{blue}{299} & \textcolor{red}{798} & +499 \\
run & $\pm${\scriptsize 54} & $\pm${\scriptsize 80} & $\pm${\scriptsize 38} & $\pm${\scriptsize 69} & $\pm${\scriptsize 30} & $\pm${\scriptsize 18} & \textcolor{blue}{$\pm${\scriptsize 22}} & \textcolor{red}{$\pm${\scriptsize 25}} & {\scriptsize 167\%} \\
\midrule
\rowcolor{gray!12}
\multicolumn{1}{c|}{\textbf{Average}} & 313.6 & 548.8 & 512.4 & 645.6 & 615.8 & 593.2 & \textcolor{blue}{696.6} & \textcolor{red}{817.6} & +121.0 \\
\rowcolor{gray!12}
\multicolumn{1}{c|}{} & $\pm${\scriptsize 140.7} & $\pm${\scriptsize 254.0} & $\pm${\scriptsize 213.5} & $\pm${\scriptsize 253.0} & $\pm${\scriptsize 269.6} & $\pm${\scriptsize 223.3} & \textcolor{blue}{$\pm${\scriptsize 231.6}} & \textcolor{red}{$\pm${\scriptsize 195.8}} & {\scriptsize 17\%} \\
\bottomrule
\end{tabular}
\end{table}

\subsection{Training Curves and Qualitative Visualizations}
\label{app:results_curves}

Figure~\ref{fig:convergence_curves} shows training curves for all 
eight DMControl tasks under VDCS Markov-switching over 1M steps. 
Figure~\ref{fig:robosuite_convergence_curves} reports RoboSuite 
results alongside qualitative restoration visualizations. 
Figures~\ref{fig:vdcs_dmcgb_comprehensive} and 
\ref{fig:robosuite_degra_demo} provide comprehensive visual examples 
of each degradation type across tasks and environments.

\begin{figure*}[hbtp]
    \centering
    \includegraphics[width=\textwidth]{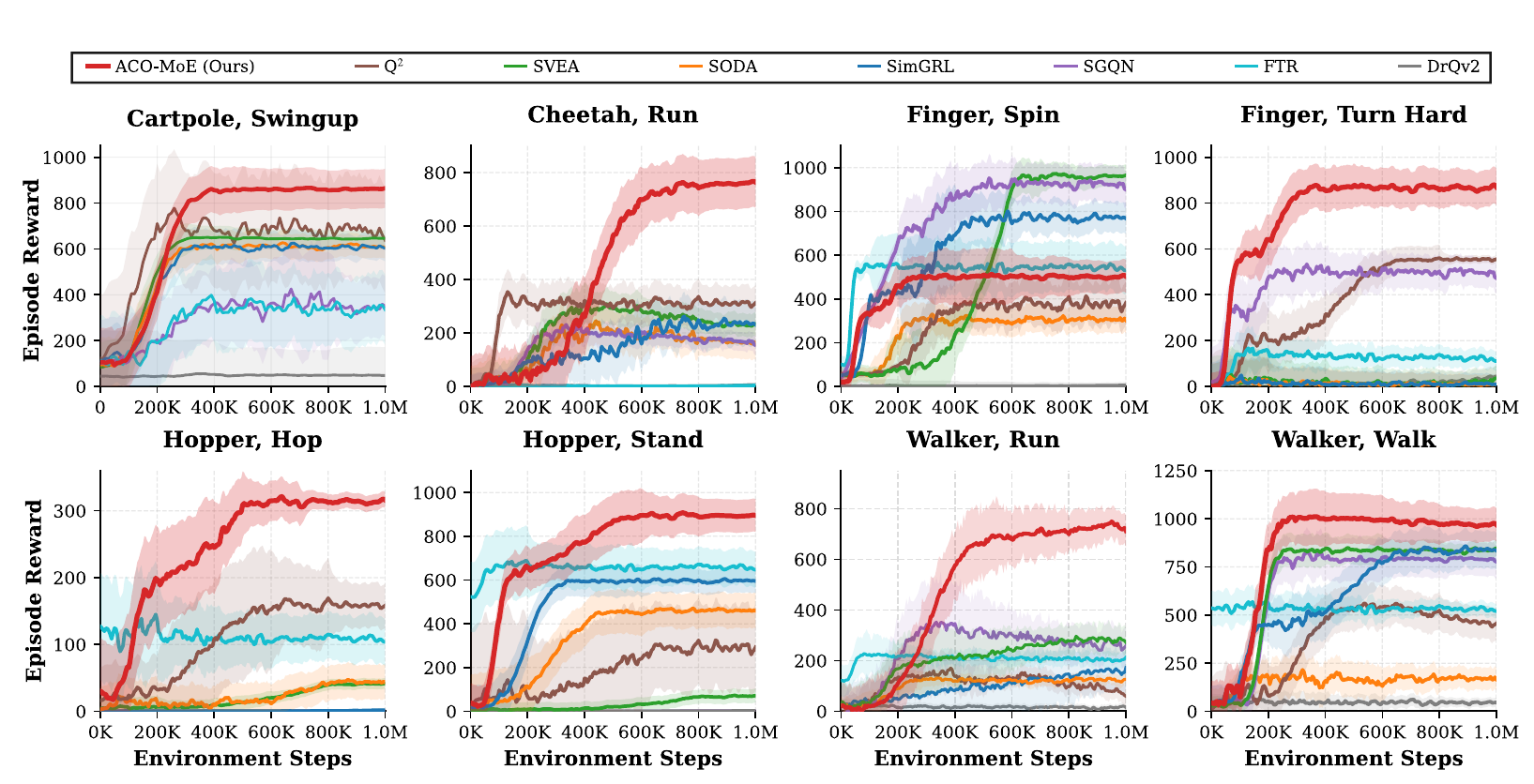}
    \caption{Training curves on DeepMind Control Suite with VDCS Markov-switching perturbations over 1M environment steps. Each curve shows the mean episode reward across 5 evaluation seeds (10 episodes each), with shaded regions indicating $\pm$1 standard deviation. Among all methods, FTR is a test-time adaptation approach, while the others are trained end-to-end. ACO-MoE consistently achieves higher asymptotic performance and faster convergence on most tasks.}
    \label{fig:convergence_curves}
\end{figure*}

\begin{figure*}[hbtp]
    \centering
    \includegraphics[width=\textwidth]{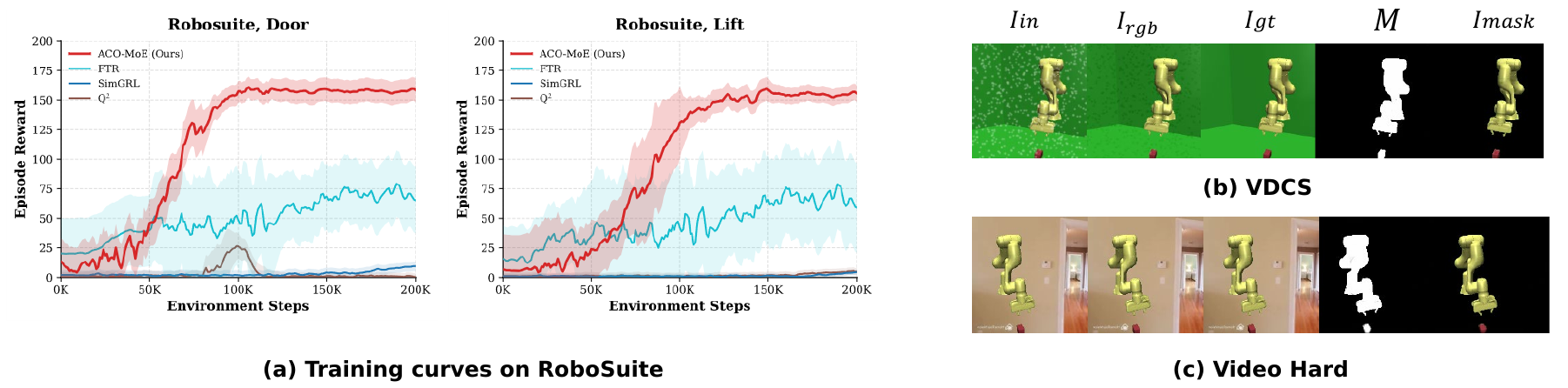}
    \caption{\textbf{Quantitative performance and qualitative restoration on RoboSuite manipulation tasks.} \textbf{(a)} Training curves for \textit{Door} and \textit{Lift} over 200K environment steps. Each curve reports the mean episode reward across 5 evaluation seeds, with shaded regions indicating $\pm$1 standard deviation. ACO-MoE achieves significantly faster convergence and higher asymptotic performance than all baselines. \textbf{(b) \& (c)} Qualitative visualization of the restoration process under VDCS Markov-switching corruptions and Video Hard background distractions, respectively. From left to right: corrupted input $x_t$, restored RGB $\hat{o}_t$, ground truth $o_t$, predicted mask $m_t$, and agent-centric observation $\tilde{x}_t$.}
    \label{fig:robosuite_convergence_curves}
\end{figure*}


\begin{figure}[hpbt]
\centering

\begin{subfigure}[b]{\textwidth}
\centering
\includegraphics[width=0.5\textwidth]{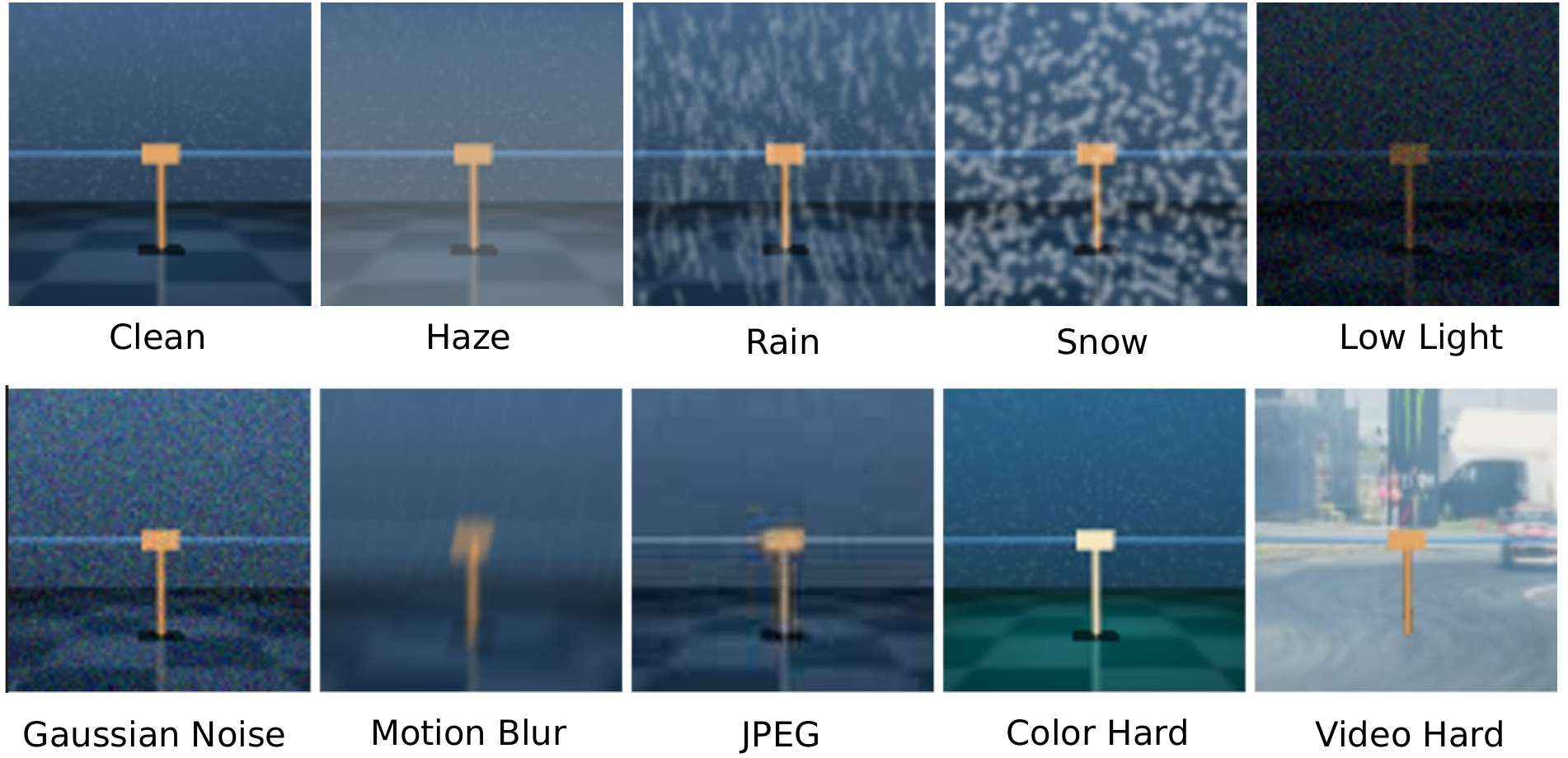}
\caption{Cartpole Swingup}
\label{fig:vdcs_cartpole}
\end{subfigure}

\vspace{0.5em}

\begin{subfigure}[b]{\textwidth}
\centering
\includegraphics[width=0.5\textwidth]{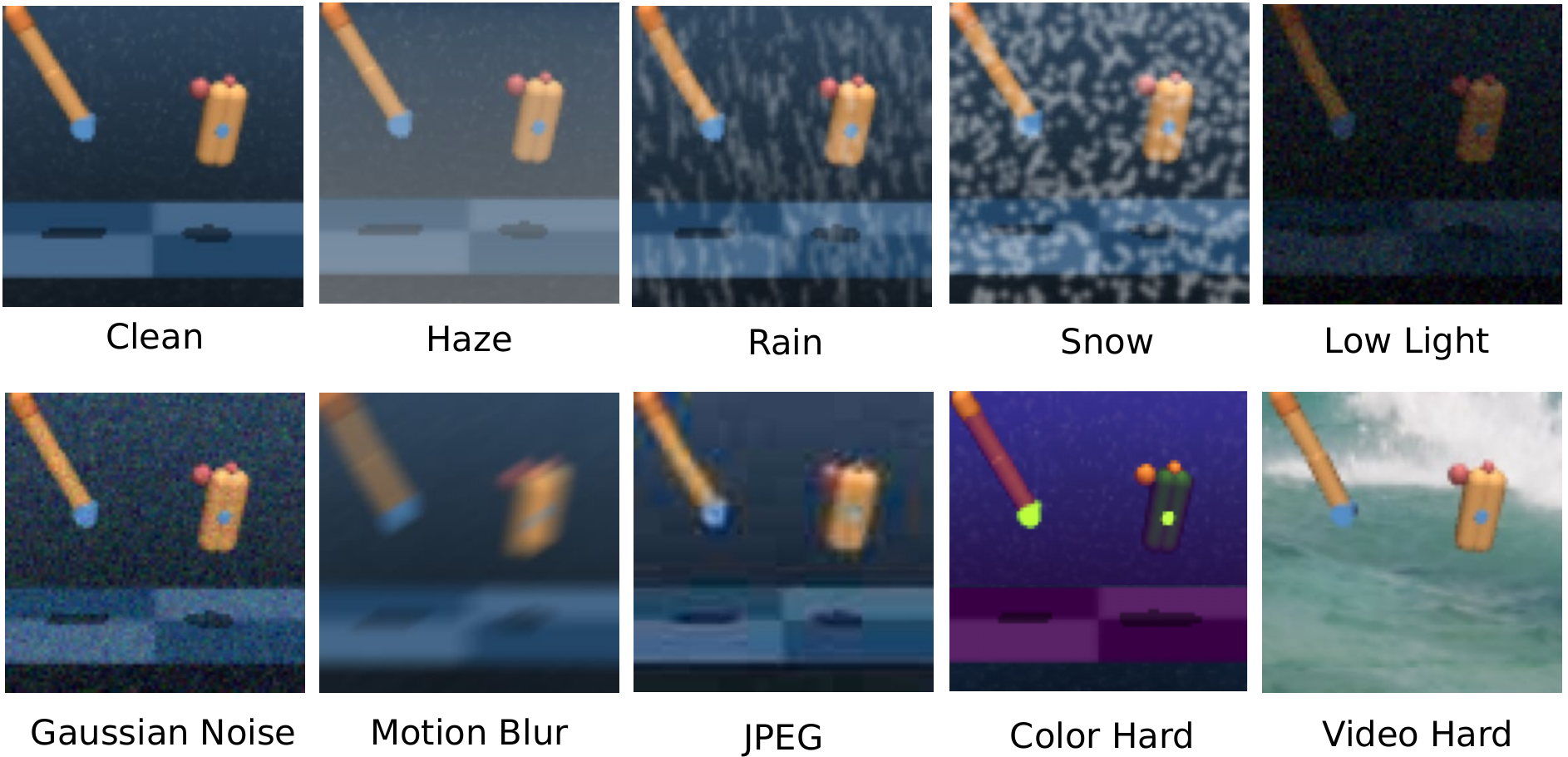}
\caption{Finger Turn Hard}
\label{fig:vdcs_finger}
\end{subfigure}

\vspace{0.5em}

\begin{subfigure}[b]{\textwidth}
\centering
\includegraphics[width=0.5\textwidth]{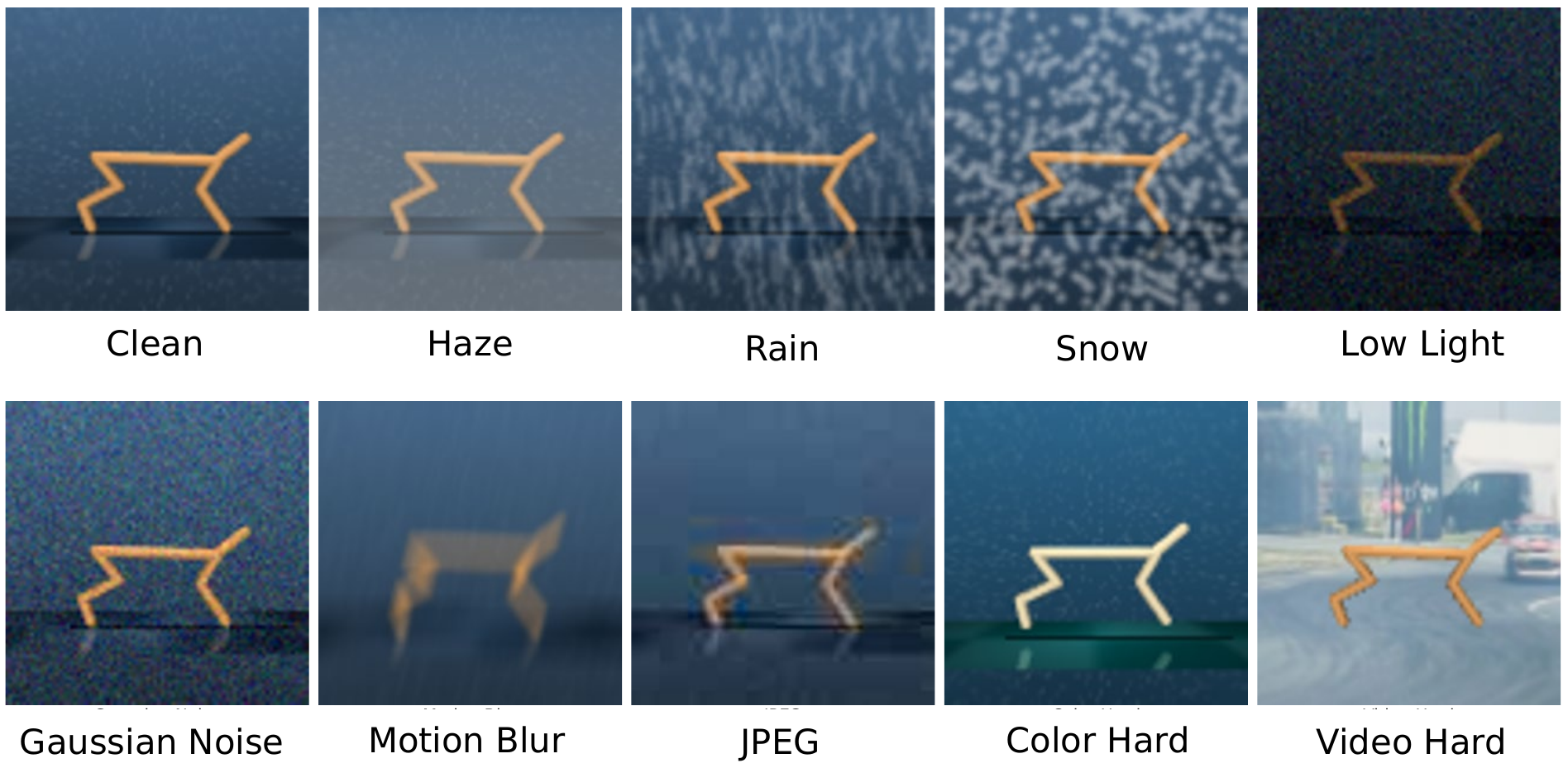}
\caption{Cheetah Run}
\label{fig:vdcs_cheetah}
\end{subfigure}

\vspace{0.5em}

\begin{subfigure}[b]{\textwidth}
\centering
\includegraphics[width=0.5\textwidth]{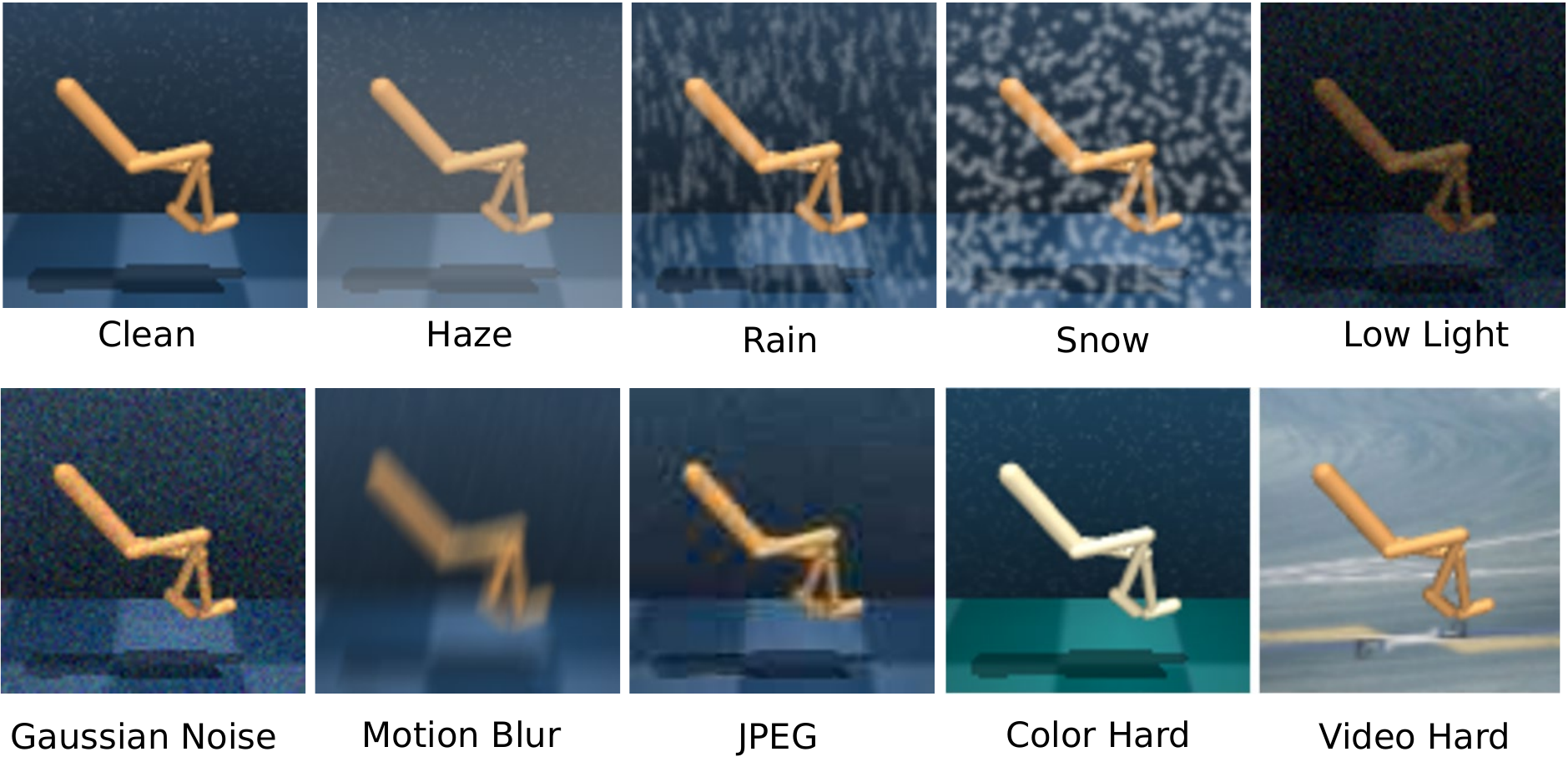}
\caption{Walker Run}
\label{fig:vdcs_walker}
\end{subfigure}

\caption{\textbf{Comprehensive degradation visualization across VDCS and DMC-GB benchmarks.} }
\label{fig:vdcs_dmcgb_comprehensive}
\end{figure}

\begin{figure}[hpbt]
    \centering

    \begin{subfigure}[t]{\linewidth}
        \centering
        \includegraphics[width=\linewidth]{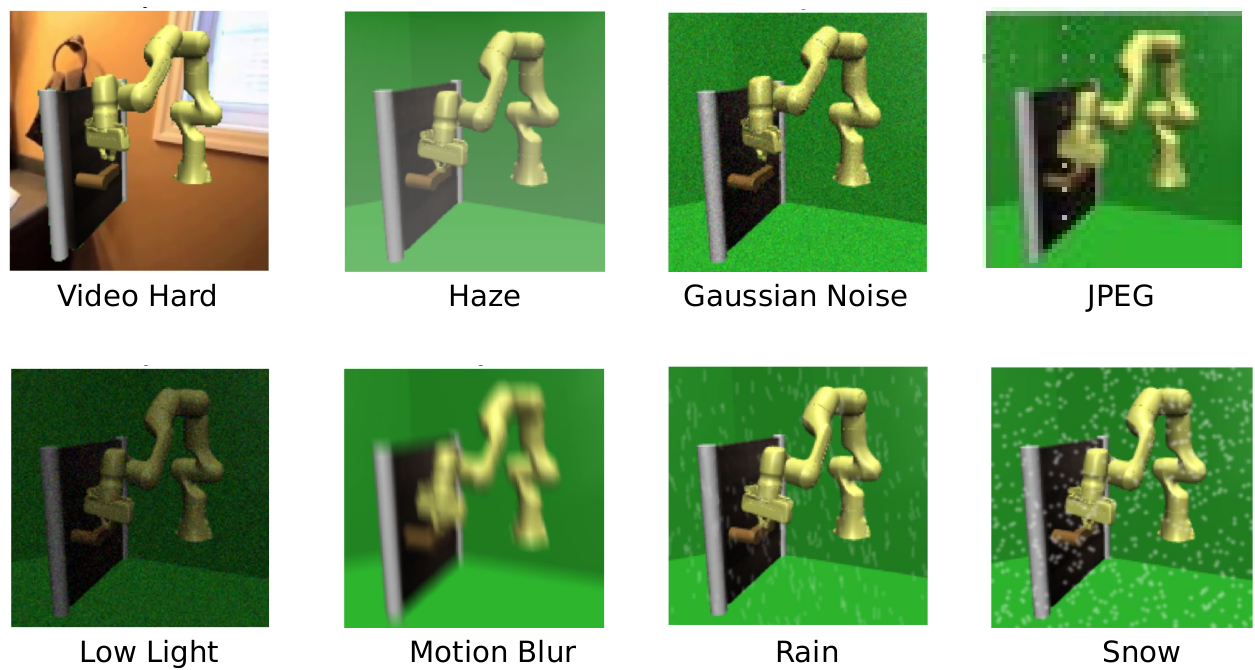}
    \end{subfigure}

    \vspace{-0.2em}
    {\centering (a) \textbf{RoboSuite Door.}\par}
    \vspace{0.6em}

    \begin{subfigure}[t]{\linewidth}
        \centering
        \includegraphics[width=\linewidth]{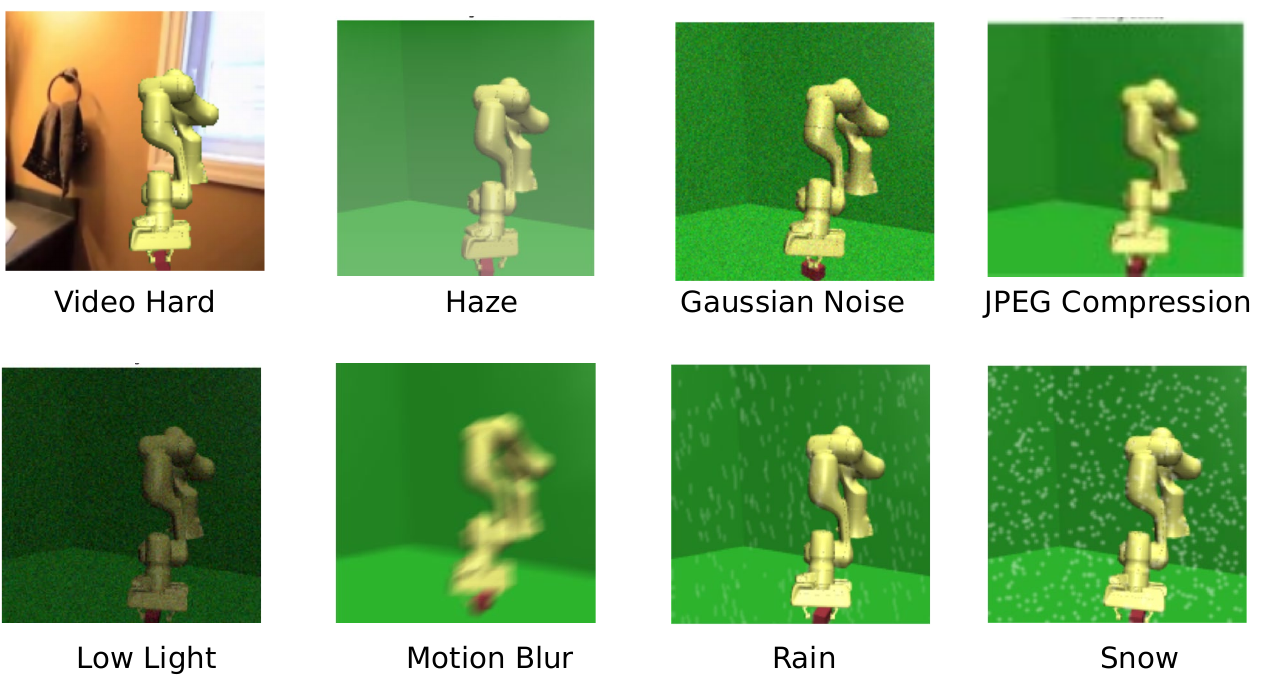}
    \end{subfigure}

    \vspace{-0.2em}
    {\centering (b) \textbf{RoboSuite Lift.}\par}

    \caption{Visualization of RoboSuite manipulation observations under \textbf{VDCS} Markov-switching corruptions and \textbf{Video-Hard} background replacement. We apply 7 frame-wise visual corruptions (haze, Gaussian noise, JPEG compression, low light, motion blur, rain, snow) while replacing the original simulator background with dynamic video content (\textit{Video Hard}). Corruption may switch each timestep according to the Markov transition matrix, creating dynamic distracted observations without changing the underlying task dynamics.}
    \label{fig:robosuite_degra_demo}
\end{figure}

\subsection{Backbone Generalization Analysis}
\label{app:backbone_generalization}

To verify that ACO-MoE's gains are not specific to the DreamerV3 backbone,
we replace DreamerV3 with DrQ-v2~\cite{yarats2021mastering} and evaluate the 
same frozen ACO-MoE preprocessing module without any modification.
Figure~\ref{fig:backbone_generalization} reports results on 
\texttt{finger\_spin} and the 8-task average under VDCS 
Markov-temporal perturbations.

On \texttt{finger\_spin}, DrQ-v2 achieves a VDCS score of 912, 
far exceeding DreamerV3's ceiling of 543---confirming that the 
\texttt{finger\_spin} gap reported in Section~\ref{sec:exp_vdcs} 
stems from DreamerV3's known limitation on fine-grained contact tasks 
rather than from preprocessing quality. With ACO-MoE, DrQ-v2 scores 872 
(95.6\% recovery), matching the recovery rates observed with DreamerV3 
(95.3\% average). Averaged over all 8 tasks, ACO-MoE recovers 91.8\% 
of DrQ-v2's performance, demonstrating that the frozen preprocessing 
module transfers across RL algorithms without modification.

\begin{figure}[hpbt]
    \centering
    \begin{subfigure}[b]{0.46\linewidth}
        \centering
        \includegraphics[width=\linewidth]{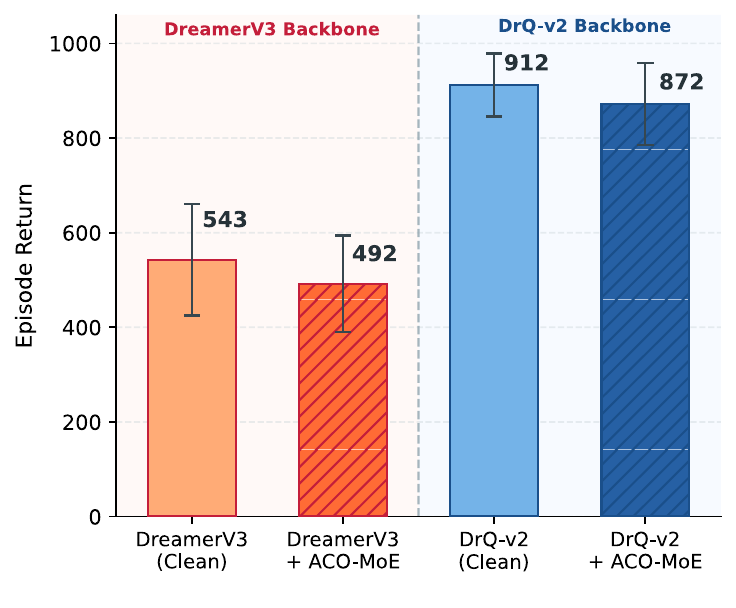}
        \caption{finger\_spin: Backbone Comparison}
        \label{fig:panel_fs}
    \end{subfigure}
    \hfill
    \begin{subfigure}[b]{0.50\linewidth}
        \centering
        \includegraphics[width=\linewidth]{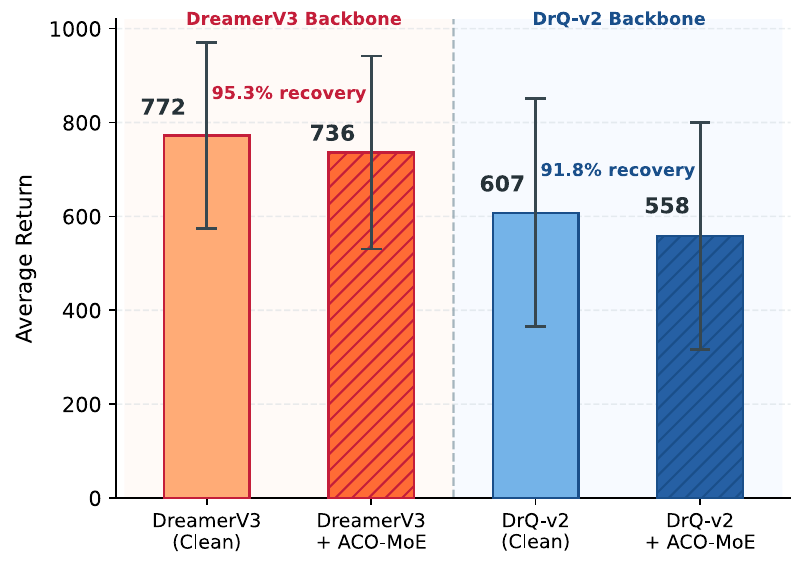}
        \caption{Average Return: Backbone $\times$ Preprocessing}
        \label{fig:panel_avg_return}
    \end{subfigure}
    \caption{
        Backbone generalization of ACO-MoE on VDCS Markov-temporal 
        perturbations.
        \textbf{(a)} DreamerV3's clean ceiling of 543 on \texttt{finger\_spin}
        explains its post-ACO-MoE score of 492, while DrQ-v2 reaches 912 and 
        872 with ACO-MoE (95.6\% recovery), confirming the gap is a backbone 
        limitation rather than a preprocessing failure.
        \textbf{(b)} Averaged over 8 tasks, ACO-MoE recovers 95.3\% 
        (DreamerV3) and 91.8\% (DrQ-v2) of each backbone's clean performance,
        demonstrating backbone-agnostic robustness.
    }
    \label{fig:backbone_generalization}
\end{figure}

\subsection{Additional Ablation Studies}
\label{app:results_ablations}

This section reports additional ablations for ACO-MoE. 
Figure~\ref{fig:protocol_ablation} compares frozen ACO-MoE with no-preprocessing 
and freeze-then-adapt variants on VDCS Markov-temporal perturbations. 
Tables~\ref{tab:video_hard_fixed} and~\ref{tab:color_hard_fixed} evaluate component 
ablations under DMC-GB video-hard and color-hard settings, while 
Table~\ref{tab:vdcs_static_single} reports VDCS static single-degradation results. 
Figure~\ref{fig:mask-rgb-ablation} visualizes dual-stream and single-stream outputs.

\begin{figure*}[hpbt]
    \centering
    \includegraphics[width=\textwidth]{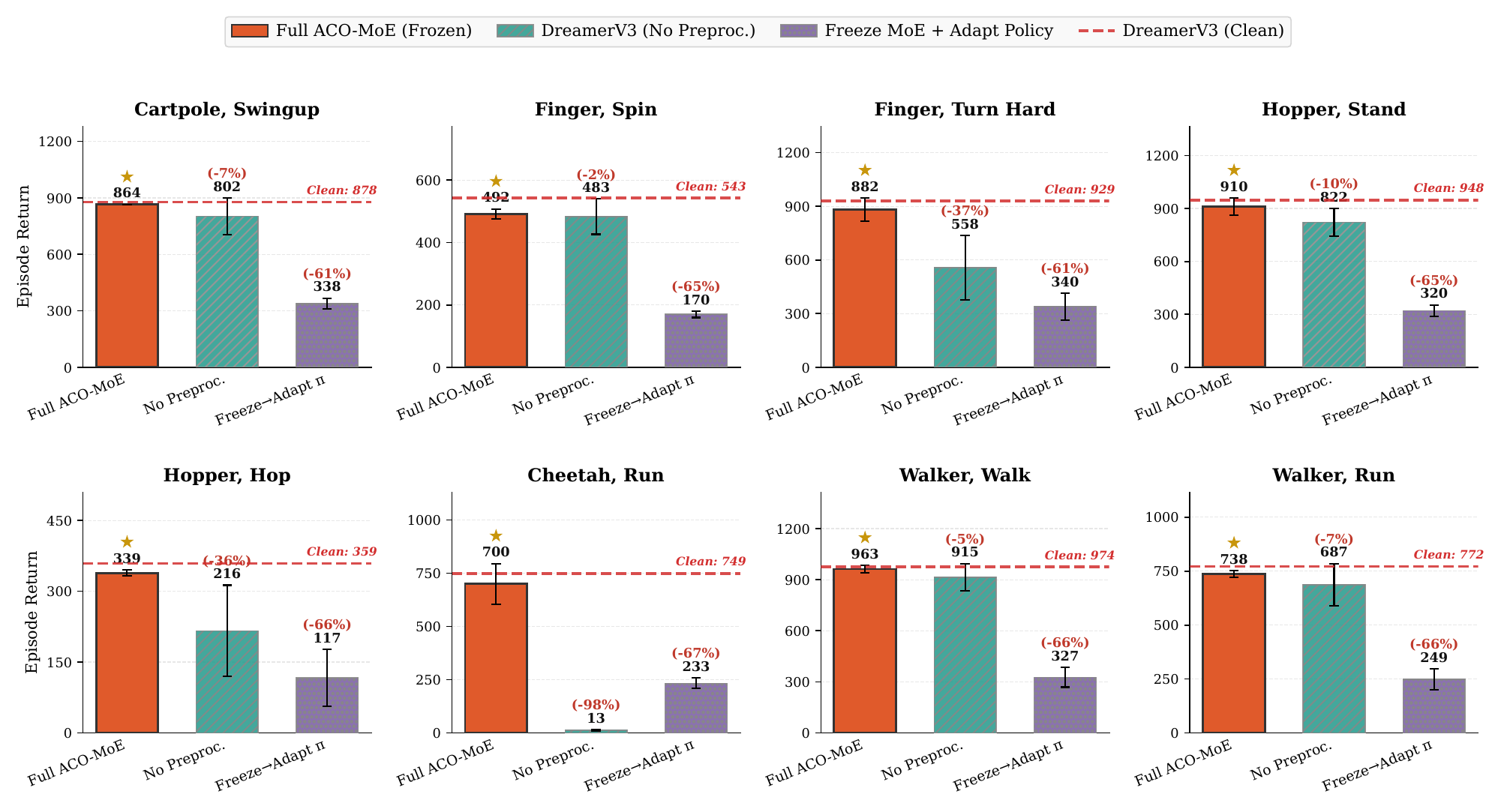}
    \caption{Ablation on VDCS Markov-temporal perturbations with DreamerV3.
We compare frozen ACO-MoE, DreamerV3 without preprocessing, and a 
freeze-then-adapt variant. Percentages denote performance relative to full ACO-MoE.
Results are mean $\pm$ std over 5 evaluation seeds.}
    \label{fig:protocol_ablation}
\end{figure*}

\begin{table}[htbp]
\centering
\caption{Ablation study in DMControl (\textit{video-hard}). \textcolor{red}{Red} 
indicates performance drop relative to the full ACO-MoE model; uncolored positive 
values indicate marginal gains within one standard deviation.}
\label{tab:video_hard_fixed}
\setlength{\tabcolsep}{3pt}
\resizebox{\textwidth}{!}{
\footnotesize
\begin{tabular}{lcccccc}
\toprule
\makecell[c]{DMControl \\ (video hard)} &
\makecell[c]{\textbf{ACO-MoE} \\ \textbf{(Ours)}} &
\makecell[c]{Single Expert} &
\makecell[c]{w/o frozen \\ policy} &
\makecell[c]{mask\_only} &
\makecell[c]{rgb\_only} &
\makecell[c]{w/o \\ restoration} \\
\midrule

cartpole, swingup & 706 $\pm$ 78 &
\makecell[c]{654 $\pm$ 105 \\ \textcolor{red}{-52 (7\%)}} &
\makecell[c]{380 $\pm$ 96 \\ \textcolor{red}{-327 (46\%)}} &
\makecell[c]{505 $\pm$ 116 \\ \textcolor{red}{-202 (29\%)}} &
\makecell[c]{135 $\pm$ 95 \\ \textcolor{red}{-571 (81\%)}} &
\makecell[c]{158 $\pm$ 68 \\ \textcolor{red}{-548 (78\%)}} \\
\addlinespace

finger, spin & 487 $\pm$ 24 &
\makecell[c]{433 $\pm$ 18 \\ \textcolor{red}{-54 (11\%)}} &
\makecell[c]{279 $\pm$ 68 \\ \textcolor{red}{-208 (43\%)}} &
\makecell[c]{489 $\pm$ 30 \\ {+3 (+1\%)}} &
\makecell[c]{277 $\pm$ 124 \\ \textcolor{red}{-209 (43\%)}} &
\makecell[c]{160 $\pm$ 178 \\ \textcolor{red}{-326 (67\%)}} \\
\addlinespace

walker, stand & 985 $\pm$ 21 &
\makecell[c]{914 $\pm$ 21 \\ \textcolor{red}{-71 (7\%)}} &
\makecell[c]{508 $\pm$ 33 \\ \textcolor{red}{-477 (48\%)}} &
\makecell[c]{988 $\pm$ 13 \\ {+3 (+0\%)}} &
\makecell[c]{668 $\pm$ 356 \\ \textcolor{red}{-317 (32\%)}} &
\makecell[c]{656 $\pm$ 392 \\ \textcolor{red}{-328 (33\%)}} \\
\addlinespace

walker, walk & 946 $\pm$ 67 &
\makecell[c]{904 $\pm$ 22 \\ \textcolor{red}{-43 (5\%)}} &
\makecell[c]{479 $\pm$ 89 \\ \textcolor{red}{-468 (49\%)}} &
\makecell[c]{975 $\pm$ 14 \\ {+28 (+3\%)}} &
\makecell[c]{412 $\pm$ 263 \\ \textcolor{red}{-534 (56\%)}} &
\makecell[c]{395 $\pm$ 285 \\ \textcolor{red}{-552 (58\%)}} \\
\addlinespace

cheetah, run & 787 $\pm$ 26 &
\makecell[c]{580 $\pm$ 126 \\ \textcolor{red}{-206 (26\%)}} &
\makecell[c]{441 $\pm$ 77 \\ \textcolor{red}{-346 (44\%)}} &
\makecell[c]{539 $\pm$ 156 \\ \textcolor{red}{-248 (32\%)}} &
\makecell[c]{188 $\pm$ 62 \\ \textcolor{red}{-599 (76\%)}} &
\makecell[c]{86 $\pm$ 43 \\ \textcolor{red}{-701 (89\%)}} \\
\midrule
\rowcolor{gray!12}
\textbf{Average} & &
\textcolor{red}{-10.9\%} &
\textcolor{red}{-46.7\%} &
\textcolor{red}{-10.6\%} &
\textcolor{red}{-57.1\%} &
\textcolor{red}{-62.8\%} \\
\bottomrule
\end{tabular}
}
\end{table}

\begin{table}[htbp]
\centering
\caption{Ablation study in DMControl (\textit{color-hard}). \textcolor{red}{Red} 
indicates performance drop relative to the full ACO-MoE model; uncolored positive 
values indicate marginal gains within one standard deviation.}
\label{tab:color_hard_fixed}
\setlength{\tabcolsep}{3pt}
\resizebox{\textwidth}{!}{
\footnotesize
\begin{tabular}{lcccccc}
\toprule
\makecell[c]{DMControl \\ (color hard)} &
\makecell[c]{\textbf{ACO-MoE}\\ \textbf{(Ours)}} &
\makecell[c]{Single Expert} &
\makecell[c]{w/o frozen \\ policy} &
\makecell[c]{mask\_only} &
\makecell[c]{rgb\_only} &
\makecell[c]{w/o \\ restoration} \\
\midrule

cartpole, swingup & 861 $\pm$ 3 &
\makecell[c]{843 $\pm$ 1 \\ \textcolor{red}{-18 (2\%)}} &
\makecell[c]{463 $\pm$ 96 \\ \textcolor{red}{-398 (46\%)}} &
\makecell[c]{737 $\pm$ 173 \\ \textcolor{red}{-124 (14\%)}} &
\makecell[c]{218 $\pm$ 171 \\ \textcolor{red}{-643 (75\%)}} &
\makecell[c]{625 $\pm$ 295 \\ \textcolor{red}{-237 (27\%)}} \\
\addlinespace

finger, spin & 505 $\pm$ 113 &
\makecell[c]{498 $\pm$ 11 \\ \textcolor{red}{-8 (2\%)}} &
\makecell[c]{290 $\pm$ 68 \\ \textcolor{red}{-216 (43\%)}} &
\makecell[c]{407 $\pm$ 110 \\ \textcolor{red}{-98 (19\%)}} &
\makecell[c]{398 $\pm$ 161 \\ \textcolor{red}{-107 (21\%)}} &
\makecell[c]{429 $\pm$ 78 \\ \textcolor{red}{-77 (15\%)}} \\
\addlinespace

walker, stand & 970 $\pm$ 47 &
\makecell[c]{950 $\pm$ 19 \\ \textcolor{red}{-20 (2\%)}} &
\makecell[c]{500 $\pm$ 33 \\ \textcolor{red}{-470 (48\%)}} &
\makecell[c]{915 $\pm$ 25 \\ \textcolor{red}{-55 (6\%)}} &
\makecell[c]{646 $\pm$ 283 \\ \textcolor{red}{-324 (33\%)}} &
\makecell[c]{988 $\pm$ 13 \\ {+17 (+2\%)}} \\
\addlinespace

walker, walk & 954 $\pm$ 29 &
\makecell[c]{959 $\pm$ 26 \\ {+5 (+1\%)}} &
\makecell[c]{483 $\pm$ 89 \\ \textcolor{red}{-472 (49\%)}} &
\makecell[c]{923 $\pm$ 57 \\ \textcolor{red}{-31 (3\%)}} &
\makecell[c]{320 $\pm$ 310 \\ \textcolor{red}{-634 (66\%)}} &
\makecell[c]{729 $\pm$ 162 \\ \textcolor{red}{-225 (24\%)}} \\
\addlinespace

cheetah, run & 615 $\pm$ 121 &
\makecell[c]{599 $\pm$ 170 \\ \textcolor{red}{-16 (3\%)}} &
\makecell[c]{345 $\pm$ 77 \\ \textcolor{red}{-271 (44\%)}} &
\makecell[c]{595 $\pm$ 152 \\ \textcolor{red}{-20 (3\%)}} &
\makecell[c]{272 $\pm$ 162 \\ \textcolor{red}{-344 (56\%)}} &
\makecell[c]{483 $\pm$ 147 \\ \textcolor{red}{-132 (21\%)}} \\
\midrule
\rowcolor{gray!12}
\textbf{Average} & &
\textcolor{red}{-1.4\%} &
\textcolor{red}{-46.7\%} &
\textcolor{red}{-9.0\%} &
\textcolor{red}{-52.5\%} &
\textcolor{red}{-16.7\%} \\
\bottomrule
\end{tabular}
}
\end{table}


\begin{table}[hbpt]
\centering
\caption{%
  Performance on VDCS \textbf{static single-degradation} benchmark
  ($\Pi{=}I$, one corruption fixed per episode).
  Values are mean\,$\pm$\,std averaged over 8 DMControl tasks.
  \textcolor{red}{Red}: best; \textcolor{blue}{Blue}: second best.
  Recovery (\%) = score\,/\,DreamerV3 clean baseline (769).
  FTR—the strongest foreground-aware baseline—collapses under
  motion blur (65) and low-light (225) because SAM-based masking cannot
  restore physically corrupted foreground pixels.
  ACO-MoE recovers ${\ge}96\%$ for six of seven types;
  motion blur (75\%) remains the hardest due to irreversible
  foreground pixel destruction.
  The shaded \emph{Markov-Temporal} row (from
  Table~\ref{tab:vdcs_markov_temporal_results}) further shows
  No-Preproc degrades sharply under switching
  ($594{\to}552$), while ACO-MoE is stable ($726{\to}736$).
}
\label{tab:vdcs_static_single}
\small
\setlength{\tabcolsep}{4.5pt}
\renewcommand{\arraystretch}{1.08}
\begin{tabular}{l|cccc|c}
\toprule
\textbf{Degradation}
  & \textbf{DreamerV3}
  & \textbf{SimGRL}
  & \textbf{FTR}
  & \textbf{ACO-MoE}
  & \textbf{Recovery} \\
  & \textbf{(No Preproc.)}
  &
  &
  & \textbf{(Ours)}
  & \textbf{(\%)} \\
\midrule
Rain
  & \textcolor{blue}{613\,$\pm$\,355}
  & 459\,$\pm$\,385
  & 516\,$\pm$\,342
  & \textcolor{red}{759\,$\pm$\,221}
  & 98.7 \\
Haze
  & \textcolor{blue}{610\,$\pm$\,356}
  & 458\,$\pm$\,383
  & 451\,$\pm$\,395
  & \textcolor{red}{751\,$\pm$\,222}
  & 97.7 \\
Snow
  & \textcolor{blue}{567\,$\pm$\,376}
  & 458\,$\pm$\,388
  & 463\,$\pm$\,334
  & \textcolor{red}{759\,$\pm$\,217}
  & 98.7 \\
\textit{Motion Blur}
  & \textcolor{blue}{557\,$\pm$\,411}
  & 295\,$\pm$\,291
  & \textit{65\,$\pm$\,42}
  & \textcolor{red}{577\,$\pm$\,262}
  & \textit{75.1} \\
Gaussian Noise
  & \textcolor{blue}{592\,$\pm$\,376}
  & 462\,$\pm$\,390
  & 572\,$\pm$\,363
  & \textcolor{red}{752\,$\pm$\,216}
  & 97.8 \\
Low-Light
  & \textcolor{blue}{578\,$\pm$\,375}
  & 459\,$\pm$\,386
  & \textit{225\,$\pm$\,118}
  & \textcolor{red}{747\,$\pm$\,218}
  & 97.1 \\
JPEG Compr.
  & \textcolor{blue}{639\,$\pm$\,346}
  & 369\,$\pm$\,332
  & 429\,$\pm$\,354
  & \textcolor{red}{739\,$\pm$\,226}
  & 96.1 \\
\midrule
\rowcolor{gray!12}
\textbf{Avg.\ (Static)}
  & \textcolor{blue}{594\,$\pm$\,367}
  & 423\,$\pm$\,357
  & 376\,$\pm$\,276
  & \textcolor{red}{726\,$\pm$\,217}
  & \textbf{94.5} \\
\midrule
\rowcolor{gray!12}
\textbf{Avg.\ (Markov)}
  & 552\,$\pm$\,122
  & \textcolor{blue}{412\,$\pm$\,315}
  & 389\,$\pm$\,179
  & \textcolor{red}{736\,$\pm$\,206}
  & \textbf{95.3} \\
\bottomrule
\end{tabular}

\end{table}

\begin{figure*}[htbp]
    \centering
    \includegraphics[width=1.0\textwidth]{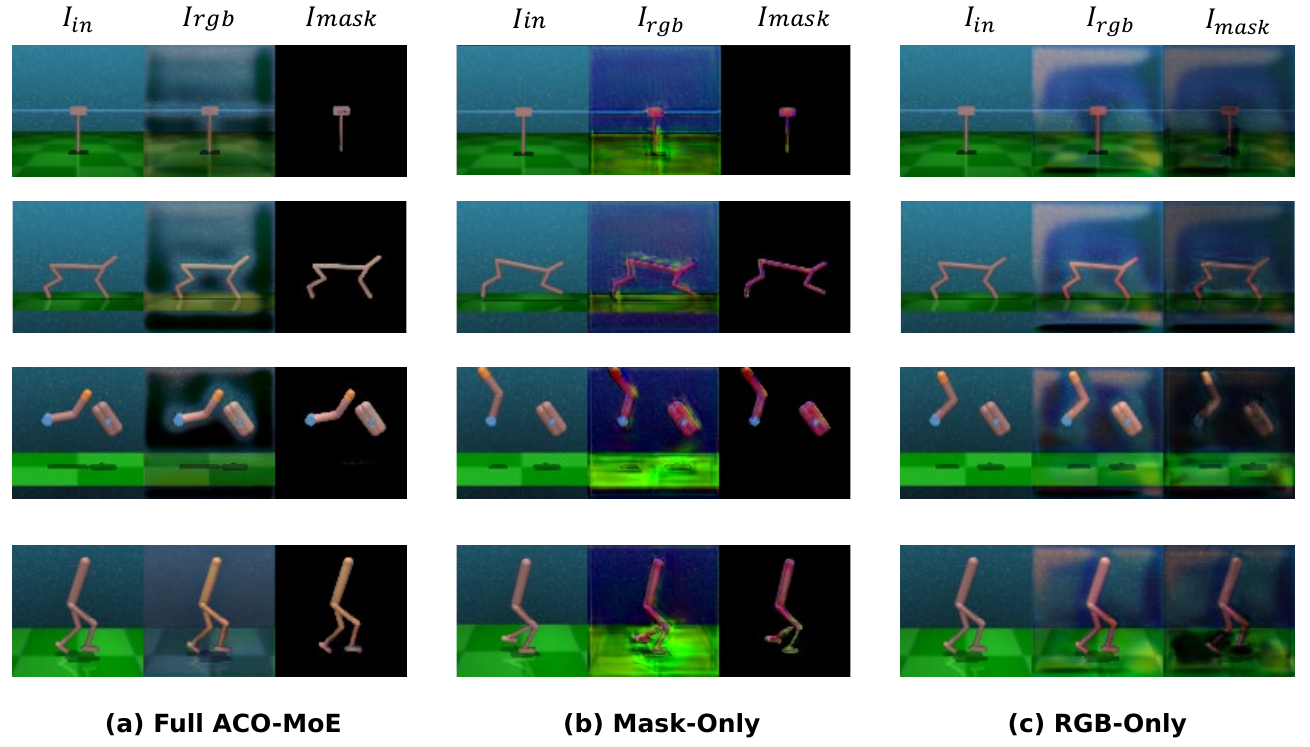}
    \caption{Ablation study on the importance of dual-stream agent-centric observations. We compare the performance of using only RGB restoration, only foreground mask, and our combined approach across various dynamic perturbations.}
    \label{fig:mask-rgb-ablation}
\end{figure*}

\subsection{Comparison with Generic Image Restoration Methods}
\label{app:generic_restoration}

We compare ACO-MoE with four all-in-one image restoration models used as frozen preprocessors before DreamerV3: PromptIR~\cite{potlapalli2023promptir}, Restormer~\cite{zamir2022restormer}, AdaIR~\cite{cui2025adair}, and NAFNet~\cite{chu2022nafssr}. Each method maps the corrupted RGB frame to a restored frame, with no foreground composition, task-aware tuning, or RL retraining.

Table~\ref{tab:generic_restoration} shows that ACO-MoE achieves an average return of 766.9, outperforming the best generic baseline, AdaIR, by $+220.3$ points (+40.3\%). The gains are largest on tasks requiring accurate foreground localization under physical degradation, including \texttt{hopper\_stand}, \texttt{hopper\_hop}, and \texttt{cheetah\_run}. The only near-tie is \texttt{walker\_stand}, where the task is almost saturated. These results suggest that generic pixel restoration alone is insufficient; the main benefit comes from ACO-MoE's agent-centric foreground composition.

\begin{table}[hbpt]
\centering
\caption{Comparison with generic image restoration methods as frozen preprocessors on VDCS Markov-temporal perturbations. All methods use the same frozen DreamerV3 policy; only the preprocessing module differs. Results are mean $\pm$ std over evaluation episodes. $\bigtriangleup$ denotes ACO-MoE's gain over the best generic baseline. \textcolor{red}{Red}: best; \textcolor{blue}{Blue}: second best.}
\label{tab:generic_restoration}
\scriptsize
\setlength{\tabcolsep}{3.5pt}
\renewcommand{\arraystretch}{0.9}
\begin{tabular}{l|cccc|c|>{\centering\arraybackslash}p{0.85cm}}
\toprule
\multicolumn{1}{c|}{\textbf{Task}}
& \multicolumn{1}{c}{PromptIR}
& \multicolumn{1}{c}{Restormer}
& \multicolumn{1}{c}{AdaIR}
& \multicolumn{1}{c|}{NAFNet}
& \multicolumn{1}{c|}{\textbf{ACO-MoE}}
& \multicolumn{1}{c}{$\bigtriangleup$} \\
\multicolumn{1}{c|}{}
& \multicolumn{1}{c}{\cite{potlapalli2023promptir}}
& \multicolumn{1}{c}{\cite{zamir2022restormer}}
& \multicolumn{1}{c}{\cite{cui2025adair}}
& \multicolumn{1}{c|}{\cite{chu2022nafssr}}
& \multicolumn{1}{c|}{\textbf{(Ours)}}
& \\
\midrule
cartpole, & \textcolor{blue}{457} & 393 & 421 & 226 & \textcolor{red}{864} & +407 \\
swingup & \textcolor{blue}{$\pm${\scriptsize 185}} & $\pm${\scriptsize 142} & $\pm${\scriptsize 216} & $\pm${\scriptsize 90} & \textcolor{red}{$\pm${\scriptsize 75}} & {\scriptsize 89\%} \\
\midrule
finger, & \textcolor{blue}{441} & 386 & 432 & 346 & \textcolor{red}{492} & +51 \\
spin & \textcolor{blue}{$\pm${\scriptsize 38}} & $\pm${\scriptsize 108} & $\pm${\scriptsize 23} & $\pm${\scriptsize 39} & \textcolor{red}{$\pm${\scriptsize 102}} & {\scriptsize 12\%} \\
\midrule
finger, & 779 & 791 & \textcolor{blue}{834} & 522 & \textcolor{red}{882} & +48 \\
turn hard & $\pm${\scriptsize 102} & $\pm${\scriptsize 114} & \textcolor{blue}{$\pm${\scriptsize 69}} & $\pm${\scriptsize 313} & \textcolor{red}{$\pm${\scriptsize 65}} & {\scriptsize 6\%} \\
\midrule
hopper, & 361 & 345 & \textcolor{blue}{444} & 240 & \textcolor{red}{910} & +466 \\
stand & $\pm${\scriptsize 150} & $\pm${\scriptsize 88} & \textcolor{blue}{$\pm${\scriptsize 126}} & $\pm${\scriptsize 57} & \textcolor{red}{$\pm${\scriptsize 50}} & {\scriptsize 105\%} \\
\midrule
hopper, & \textcolor{blue}{130} & 95 & 111 & 54 & \textcolor{red}{339} & +209 \\
hop & \textcolor{blue}{$\pm${\scriptsize 15}} & $\pm${\scriptsize 29} & $\pm${\scriptsize 12} & $\pm${\scriptsize 24} & \textcolor{red}{$\pm${\scriptsize 46}} & {\scriptsize 161\%} \\
\midrule
cheetah, & \textcolor{blue}{388} & 386 & 386 & 270 & \textcolor{red}{700} & +312 \\
run & \textcolor{blue}{$\pm${\scriptsize 75}} & $\pm${\scriptsize 102} & $\pm${\scriptsize 55} & $\pm${\scriptsize 111} & \textcolor{red}{$\pm${\scriptsize 95}} & {\scriptsize 80\%} \\
\midrule
walker, & \textcolor{blue}{857} & 826 & 760 & 616 & \textcolor{red}{963} & +106 \\
walk & \textcolor{blue}{$\pm${\scriptsize 73}} & $\pm${\scriptsize 68} & $\pm${\scriptsize 53} & $\pm${\scriptsize 54} & \textcolor{red}{$\pm${\scriptsize 87}} & {\scriptsize 12\%} \\
\midrule
walker, & 949 & 982 & \textcolor{red}{985} & 770 & \textcolor{blue}{985} & $-$0.3 \\
stand$^\dagger$ & $\pm${\scriptsize 62} & $\pm${\scriptsize 15} & \textcolor{red}{$\pm${\scriptsize 16}} & $\pm${\scriptsize 101} & \textcolor{blue}{$\pm${\scriptsize 18}} & {\scriptsize 0\%} \\
\midrule
\rowcolor{gray!12}
\multicolumn{1}{c|}{\textbf{Average}}
& 545.2 & 525.5 & \textcolor{blue}{546.6} & 380.5 & \textcolor{red}{766.9} & +220.3 \\
\rowcolor{gray!12}
\multicolumn{1}{c|}{}
& $\pm${\scriptsize 265.7}
& $\pm${\scriptsize 283.6}
& \textcolor{blue}{$\pm${\scriptsize 268.2}}
& $\pm${\scriptsize 221.0}
& \textcolor{red}{$\pm${\scriptsize 221.5}}
& {\scriptsize 40.3\%} \\
\bottomrule
\end{tabular}

\end{table}

\subsection{Zero-Shot Generalization to Unseen OOD Visual Perturbations}
\label{app:unseen_ood}

We further evaluate whether frozen ACO-MoE generalizes to unseen out-of-distribution visual perturbations. Both ACO-MoE and the downstream controller are kept frozen, with no retraining or test-time adaptation. We use the same eight DMControl tasks as in the main VDCS experiments. Each unseen corruption is applied in single-corruption mode, with one fixed corruption family per episode and a $\pm 10\%$ intensity jitter. Results are averaged over 5 seeds with one evaluation episode per seed.

The six unseen perturbations are shown in Fig.~\ref{fig:unseen_deg}: \texttt{defocus\_blur}, \texttt{frost}, \texttt{occlusion\_patch}, \texttt{saturation}, \texttt{shadow}, and \texttt{shot\_noise}. These families are excluded from ACO-MoE pretraining and test different open-set shifts, including blur, weather-like veiling, partial occlusion, color shift, illumination shift, and signal-dependent sensor noise.

Table~\ref{tab:zero_shot_unseen_ood} reports the full results. ACO-MoE transfers well to several unseen perturbations, with average returns around $745$--$752$ for \texttt{frost}, \texttt{saturation}, \texttt{shadow}, and \texttt{shot\_noise}. The hardest cases are \texttt{defocus\_blur} and \texttt{occlusion\_patch}, with average returns of $542.1$ and $649.8$, respectively. This indicates that ACO-MoE generalizes beyond the training corruption families, but remains limited when unseen perturbations strongly destroy spatial structure or occlude task-relevant foreground regions.

\begin{table*}[hbpt]
\centering
\caption{Zero-shot evaluation on unseen OOD visual perturbations. Both ACO-MoE and the downstream controller are frozen, with no retraining or test-time adaptation. Results are mean $\pm$ std over 5 seeds, with 1 evaluation episode per seed. The last row reports the mean over the 8 tasks for each unseen corruption family.}
\label{tab:zero_shot_unseen_ood}
\resizebox{\textwidth}{!}{
\begin{tabular}{lcccccc}
\toprule
Task 
& Defocus Blur 
& Frost 
& Occlusion Patch 
& Saturation 
& Shadow 
& Shot Noise \\
\midrule
cartpole\_swingup 
& $859.65 \pm 2.65$ 
& $865.04 \pm 0.42$ 
& $854.98 \pm 4.06$ 
& $865.40 \pm 0.45$ 
& $864.85 \pm 0.60$ 
& $863.85 \pm 0.92$ \\

finger\_spin 
& $491.80 \pm 10.76$ 
& $505.80 \pm 10.19$ 
& $467.60 \pm 15.08$ 
& $509.40 \pm 8.24$ 
& $499.40 \pm 25.06$ 
& $497.00 \pm 14.31$ \\

finger\_turn\_hard 
& $374.80 \pm 444.15$ 
& $867.00 \pm 131.93$ 
& $611.80 \pm 357.54$ 
& $837.40 \pm 147.54$ 
& $957.20 \pm 23.60$ 
& $927.60 \pm 38.19$ \\

hopper\_stand 
& $329.37 \pm 61.02$ 
& $939.58 \pm 17.52$ 
& $890.74 \pm 75.71$ 
& $935.79 \pm 21.30$ 
& $929.05 \pm 23.80$ 
& $925.02 \pm 21.31$ \\

hopper\_hop 
& $98.60 \pm 54.01$ 
& $355.79 \pm 18.09$ 
& $285.81 \pm 10.30$ 
& $361.30 \pm 19.24$ 
& $329.14 \pm 37.60$ 
& $364.52 \pm 20.33$ \\

cheetah\_run 
& $541.97 \pm 111.11$ 
& $753.71 \pm 67.93$ 
& $567.78 \pm 88.08$ 
& $763.86 \pm 111.11$ 
& $743.88 \pm 76.45$ 
& $733.90 \pm 59.70$ \\

walker\_walk 
& $936.41 \pm 37.17$ 
& $962.68 \pm 21.93$ 
& $911.50 \pm 37.06$ 
& $962.73 \pm 16.38$ 
& $950.86 \pm 45.54$ 
& $946.81 \pm 38.25$ \\

walker\_run 
& $704.00 \pm 39.96$ 
& $745.47 \pm 13.56$ 
& $608.49 \pm 43.38$ 
& $722.29 \pm 32.53$ 
& $739.28 \pm 22.69$ 
& $748.05 \pm 13.57$ \\
\midrule
Average over tasks
& $542.1$
& $749.4$
& $649.8$
& $744.8$
& $751.7$
& $750.8$ \\
\bottomrule
\end{tabular}
}
\end{table*}
\begin{figure*}[hbpt]
    \centering
    \includegraphics[width=1\linewidth]{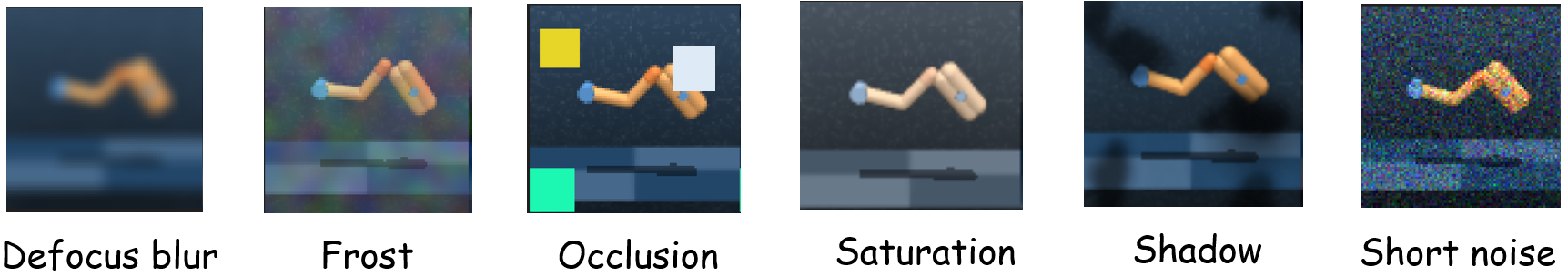}
    \caption{Examples of the six unseen OOD visual perturbations used for zero-shot evaluation: \texttt{defocus\_blur}, \texttt{frost}, \texttt{occlusion\_patch}, \texttt{saturation}, \texttt{shadow}, and \texttt{shot\_noise}. These perturbation families are excluded from ACO-MoE pretraining and are evaluated only at test time with the frozen observation adapter and frozen downstream controller.}
    \label{fig:unseen_deg}
\end{figure*}

\end{document}